\documentclass[11pt]{article}

\usepackage[margin=1in]{geometry}
\usepackage{natbib}

\usepackage[utf8]{inputenc} %
\usepackage[T1]{fontenc}    %
\usepackage{url}            %
\usepackage{booktabs}       %
\usepackage{amsfonts}       %
\usepackage{nicefrac}       %
\usepackage{microtype}      %
\usepackage{xcolor}         %

\usepackage[backref,colorlinks,citecolor=blue,bookmarks=true]{hyperref}

\usepackage{graphicx}
\usepackage{subcaption}
\usepackage{amsmath}
\usepackage{amssymb}
\usepackage{mathtools}
\usepackage{amsthm}

\theoremstyle{plain}
\newtheorem{theorem}{Theorem}[section]
\newtheorem{proposition}[theorem]{Proposition}
\newtheorem{lemma}[theorem]{Lemma}

\theoremstyle{definition}
\newtheorem{definition}[theorem]{Definition}

\theoremstyle{remark}
\newtheorem{remark}[theorem]{Remark}

\usepackage{thmtools}
\usepackage{thm-restate}

\usepackage{float}
\usepackage{enumitem}
\setlist{nosep}

\usepackage[capitalize,noabbrev]{cleveref}

\usepackage{tikz} 
\usetikzlibrary{patterns, decorations.pathmorphing, trees, shapes.geometric, arrows.meta}
\usepackage{wrapfig}

\newcommand*{\R}{{\mathbb{R}}}

\newcommand*{\calD}{{\mathcal{D}}}

\newcommand*{\calX}{{\mathcal{X}}}
\newcommand*{\calY}{{\mathcal{Y}}}

\newcommand*{\calP}{{\mathcal{P}}}

\newcommand*{\calT}{{\mathcal{T}}}

\newcommand*{\DY}{\Delta \calY}
\newcommand*{\DDY}{\Delta \Delta \calY}

\let\phi\varphi

\DeclareMathOperator*{\ex}{\mathbb{E}}

\let\hat\widehat

\newcommand{\ignore}[1]{}

\newcommand{\RL}{\mathrm{RL}}
\newcommand{\IL}{\mathrm{IL}}
\newcommand{\eat}[1]{}
\newcommand{\distr}{p}

\newcommand{\mix}{\pi}
\newcommand{\mixpred}{F}
\newcommand{\mixbayes}{F^*}
\newcommand{\pred}{f}

\newcommand{\pbayes}{\pred^*}

\DeclarePairedDelimiterX{\divx}[2]{(}{)}{%
  #1 , #2
}

\numberwithin{equation}{section}

\newcommand{\compareplots}[6]{
\begin{figure}[H]
    \centering
    \begin{subfigure}{0.48\textwidth}
        \centering
        \includegraphics[width=\linewidth]{#1}
        \caption{Calibrated Weak Model}
        \label{#3}
    \end{subfigure}\hfill
    \begin{subfigure}{0.48\textwidth}
        \centering
        \includegraphics[width=\linewidth]{#2}
        \caption{Uncalibrated Weak Model}
        \label{#4}
    \end{subfigure}
    
    \caption{#5}
    \label{#6}
\end{figure}
}

\newcommand{\compareplotsvert}[6]{
\begin{figure}[H]
    \centering
    \begin{subfigure}{\textwidth}
        \centering
        \includegraphics[width=\linewidth]{#1}
        \caption{Calibrated Weak Model}
        \label{#3}
    \end{subfigure}
    \vspace{0.5cm}
    \begin{subfigure}{\textwidth}
        \centering
        \includegraphics[width=\linewidth]{#2}
        \caption{Uncalibrated Weak Model}
        \label{#4}
    \end{subfigure}
    
    \caption{#5}
    \label{#6}
\end{figure}
}

\newcommand{\compareplotsvertsmall}[6]{
\begin{figure}[H]
    \centering
    \begin{subfigure}{0.85\textwidth}
        \centering
        \includegraphics[width=\linewidth]{#1}
        \caption{Calibrated Weak Model}
        \label{#3}
    \end{subfigure}
    \begin{subfigure}{0.85\textwidth}
        \centering
        \includegraphics[width=\linewidth]{#2}
        \caption{Uncalibrated Weak Model}
        \label{#4}
    \end{subfigure}
    
    \caption{#5}
    \label{#6}
\end{figure}
}

\title{Flexible Routing via Uncertainty Decomposition}

\author{
  Charlotte Peale\textsuperscript{1,$\ast$,$\dagger$},
  Siddartha Devic\textsuperscript{2,$\dagger$} \\
  Parikshit Gopalan\textsuperscript{3},
  Udi Wieder\textsuperscript{3},
  Aravind Gollakota\textsuperscript{3} \\[1.5ex]
  \textsuperscript{1}Stanford University \quad
  \textsuperscript{2}University of Southern California \quad
  \textsuperscript{3}Apple
}

\begin{document}

\maketitle

{
  \renewcommand{\thefootnote}{\fnsymbol{footnote}}
\footnotetext[1]{Lead author. Correspondence to cpeale@stanford.edu.}
\footnotetext[2]{Work done while interning at Apple.}
}

\begin{abstract}
  A key strategy for balancing performance and cost in modern machine learning systems is to dynamically route queries to either a low-cost model or a more expensive oracle (such as a large pretrained model or human expert), an approach known as model routing. In this work we present a new uncertainty-aware router that (1) avoids unnecessary oracle calls on inherently ambiguous queries, and (2) adapts dynamically to different loss functions and cost parameters through simple hyperparameter changes, without retraining. Our method, applicable to any classification setting where multiple independent annotations per input are available, is based on decomposing total uncertainty into irreducible and reducible components using higher-order predictors \citep{ahdritzprovable}. This enables a unified approach to both routing and abstention: predict with the weak model when uncertainty is low, route to the oracle when reducible uncertainty is high, and abstain when irreducible uncertainty is high. Our router comes with strong theoretical guarantees bounding regret relative to optimal task-specific routers. We conduct experiments on both synthetic and real-world datasets that demonstrate the benefits of our approach in suitable regimes---in particular, whenever reducible and irreducible uncertainty are not too correlated.

\end{abstract}

\section{Introduction}
A core challenge in modern machine learning is navigating the trade-off between model performance and computational cost. As models have grown in scale and capability, they have also become more expensive to query in terms of cost, latency, and other resources. In order to build efficient and modular systems, it is increasingly important to be able to dynamically select the most appropriate model for a particular query or task, a task known as \emph{model routing}. Routing seeks to optimize this trade-off by handling ``simple'' queries with a low-cost model (e.g., a small local model or non-thinking LLM) and deferring ``complex'' queries to an expensive, high-quality oracle (e.g., a powerful server-side model, ChatGPT 5's ``Thinking'' mode, or human experts in automated data annotation pipelines).

There has been a large body of work on routing and closely related problems such as orchestration, selective prediction, and learning to defer. Most existing methods can be grouped into two main categories: \textbf{Confidence-based methods}, which leverage uncertainty estimates to route to the oracle (or alternatively abstaining) when the weak model's confidence is low (see e.g.~\cite{geifman2017selective}), and \textbf{Supervision-based methods}, which use an auxiliary router trained directly to predict when the stronger model will out-perform the weaker model (see e.g.~\cite{stronglearning, ongroutellm}). %
While both have their strengths, they also have key weaknesses. Confidence-based methods often fail to account for irreducible (aleatoric) uncertainty (such as ambiguous or noisy utterances), needlessly routing to the oracle even when it would also perform poorly. Conversely, supervision-based routers require expensive, task-specific training and lack the flexibility to adapt to changing API costs or downstream losses without retraining. See \cref{sec:additional-related-work} for an extended discussion of related work.

\paragraph{Our contribution}

We present a \emph{flexible, uncertainty-aware router} that integrates both routing and abstention into a unified framework. Our approach builds on the recently introduced abstraction of a higher-order predictor \citep{ahdritzprovable}, which predicts mixtures over label distributions.\footnote{We point out that higher-order predictors also arise naturally in the form of Bayesian models and ensembles, albeit potentially lacking the calibration guarantee.} Assuming a property known as higher-order calibration (see Definition~\ref{def:hoc}), we can decompose the expected loss on an input into irreducible (aleatoric) and reducible (epistemic) components. By leveraging this meaningful uncertainty decomposition, our method offers two key properties:
\begin{itemize}
    \item \textbf{Handles irreducible uncertainty:} It estimates a query's irreducible uncertainty to take appropriate action rather than needlessly escalating. The core routing strategy is straightforward: if the loss is primarily irreducible $\rightarrow$ abstain (or clarify with the user); if primarily reducible $\rightarrow$ route to the oracle; otherwise $\rightarrow$ predict using the weak model.
    \item \textbf{Zero-shot Flexibility:} it can dynamically adapt to different configurations of downstream loss, routing cost, and abstention penalties through simple hyperparameter adjustments, with no need for retraining.
\end{itemize}
\begin{figure}[t]
    \centering
    \begin{minipage}{0.53\textwidth}
        \centering
        \includegraphics[width=\textwidth]{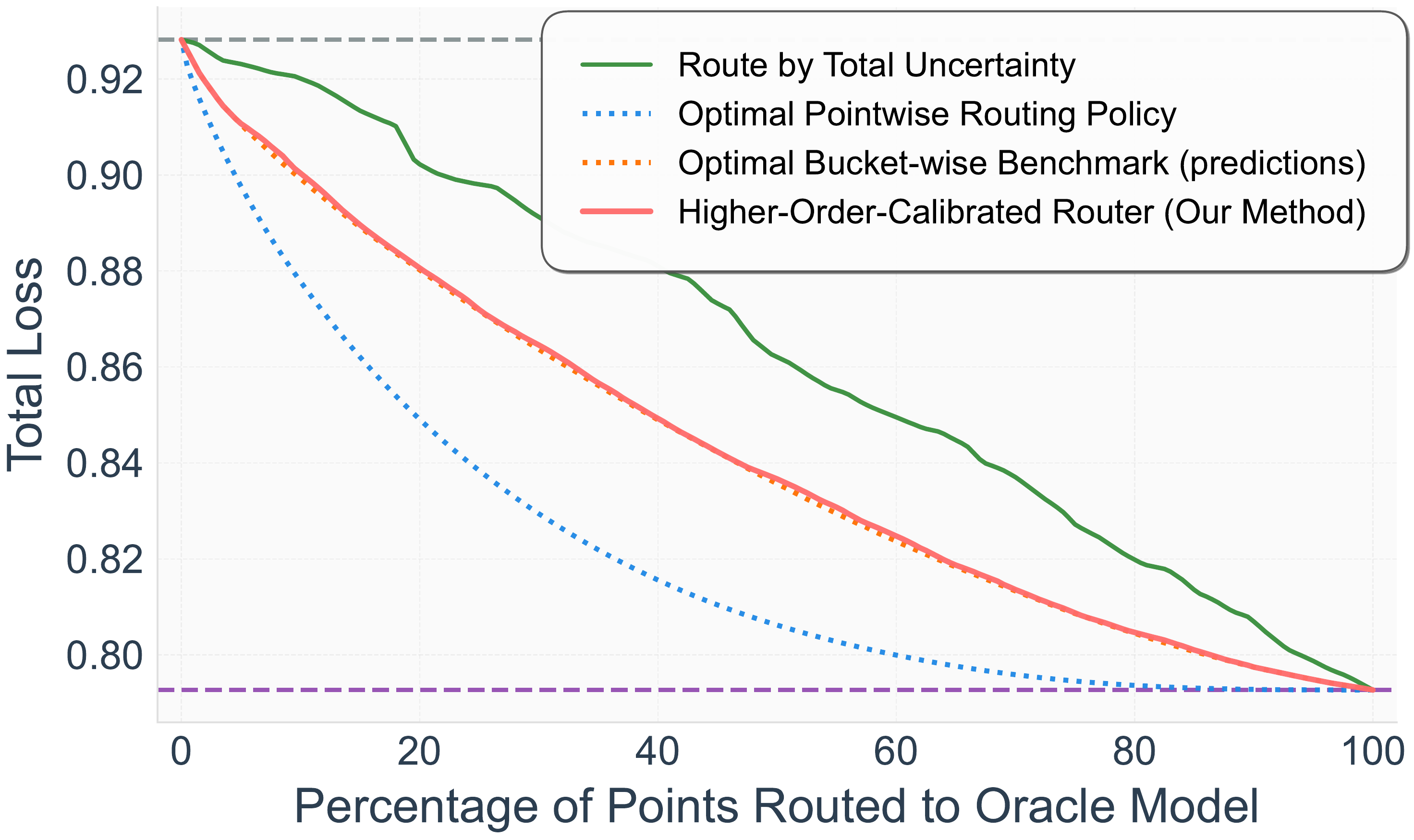}
        \caption{Routing performance of our method on synthetic data and a first-order-calibrated weak model compared to the point-wise optimal routing policy and a router based on total uncertainty. The data generating process is described in \cref{sec:synthetic_data_gen} and the ground-truth data is visualized in \cref{fig:johnson_fn}.}
        \label{fig:johnson_routing}
    \end{minipage}
    \hfill %
    \begin{minipage}{0.45\textwidth}
        \centering
        \resizebox{\textwidth}{!}{
\begin{tikzpicture}[
    scale=0.85, %
    transform shape, %
    region predict/.style={fill=blue!10},
    region route/.style={fill=green!10},
    region abstain/.style={fill=red!10},
    wavy border/.style={decorate, decoration={snake, amplitude=1pt, segment length=4pt}},
    thick dashed/.style={dashed, thick},
    axis line/.style={thick, ->, >=stealth},
    main text/.style={font=\sffamily\bfseries\small},
    label text/.style={font=\sffamily\footnotesize},
    math text/.style={font=\small}
]

    \def\xmax{6}      
    \def\ymax{4.5}      
    \def\alphaVal{1.8}  
    \def\betaVal{4.0}   
    \pgfmathsetmacro{\intersectX}{\betaVal - \alphaVal}

    \fill[region predict] (0,0) -- (\betaVal,0) -- (\intersectX,\alphaVal) -- (0,\alphaVal) -- cycle;

    \fill[region route] 
        (0,\alphaVal) -- (\intersectX,\alphaVal) -- (\intersectX,\ymax) 
        decorate [wavy border] { -- (0,\ymax) } 
        -- cycle;

    \fill[region abstain] 
        (\intersectX,\alphaVal) -- (\betaVal,0) -- (\xmax,0) 
        decorate [wavy border] { -- (\xmax,\ymax) -- (\intersectX,\ymax) } 
        -- cycle;

    \draw[thick dashed] (\intersectX, \alphaVal) -- (\betaVal, 0);
    \draw[thick dashed] (0, \alphaVal) -- (\intersectX, \alphaVal);
    \draw[thick dashed] (\intersectX, \alphaVal) -- (\intersectX, \ymax);

    \draw[axis line] (0,0) -- (\xmax + 0.5, 0);
    \node[anchor=north east, font=\sffamily\small, align=right] at (\xmax + 0.8, -0.2) 
        {\textbf{Irreducible Loss}\\ $L(f^*, f^*)$};
    \draw[axis line] (0,0) -- (0, \ymax + 0.5);

    \draw[thick] (0, \alphaVal) -- (-0.1, \alphaVal) node[left, math text] {$\alpha$};
    \draw[thick] (\betaVal, 0) -- (\betaVal, -0.1) node[below, math text] {$\beta$};
    
    \filldraw (\intersectX, \alphaVal) circle (1.5pt);
    \node[anchor=south west, font=\footnotesize, inner sep=2pt] at (\intersectX, \alphaVal) {$(\beta-\alpha, \alpha)$};

    \node[anchor=south, rotate=90, font=\sffamily\footnotesize, align=center] at (-0.5, \ymax/2) 
        {\textbf{Reducible Loss}\\$L(f^*,f) - L(f^*,f^*)$};

    \node[main text, text=blue!60!black] at (0.35*\betaVal, 0.45*\alphaVal) {PREDICT};
    \node[main text, text=green!60!black] at (0.5*\intersectX, \alphaVal + 1.0) {ROUTE};
    \node[main text, text=red!60!black] at (\intersectX + 1.8, \alphaVal + 1.0) {ABSTAIN};

\end{tikzpicture}
}
\caption{Optimal routing decisions for fixed penalties $\alpha$ and $\beta$.}
\label{fig:optimal_router_landscape}
    \end{minipage}
\end{figure}

\paragraph{Theoretical properties} Our routing framework comes with theoretical guarantees that bound the regret relative to using an \emph{optimal router} that perfectly knows both the weak model's and the oracle's predictions at every point. The strongest form of our guarantee, stated informally, is as follows. Suppose we have a weak model $\pred$ that we are able to higher-order calibrate with respect to an oracle $\pbayes$, which we assume to be the true conditional distribution for our main results, but later generalize. At a high-level, the higher-order calibration guarantee implies that for every weak model prediction $v$, we have access to the true mixture of $\pbayes(z)$ ranging over all $z$ such that $\pred(z) = v$. %
Then, we can instantiate an uncertainty-aware router that, simultaneously for all task configurations (namely downstream loss, abstention cost, and routing cost), performs at least as well as any task-specific router that is a function of the weak model $\pred$. One interpretation of our results is that they provide omniprediction-style guarantees \citep{gopalan2022loss} for routing. We provide the full technical details in \cref{sec:routing_from_hoc}.%

\paragraph{Experiments} We run experiments on a variety of real-world and synthetic datasets to obtain a nuanced understanding of the benefits of our method. In \cref{fig:johnson_routing} we show how our method yields strong routing performance when applied to a synthetic binary classification dataset. However we also observe that in certain scenarios, our method  does not yield significant improvements over simpler baselines. To explain this, we lay out different regimes of uncertainty-aware routing, and describe when routing based on reducible uncertainty yields the most benefit. Briefly, we expect to see a benefit when reducible uncertainty is not too correlated with irreducible uncertainty. Experiments are detailed in \cref{sec:experiments}.

\section{Related Work}\label{sec:additional-related-work}
Model routing and related problems such as orchestration, learning to defer, and selective prediction have been the subject of a vast body of work. Here we do not attempt a complete survey but instead aim to highlight the general principles of these lines of work and how they relate to ours. We note at the outset that our method is not intended to be fully general-purpose but rather assumes: (a) a binary or multiclass classification setting with access to a calibration dataset consisting of multiple labels per input, (b) an oracle that is strictly stronger than (not complementary to) the weak model. A significant difference from all prior work is that to our knowledge, ours is the first method that performs both abstention and routing in a unified way, and with strong distribution-free theoretical guarantees. %

\textbf{Selective prediction.} The idea of augmenting machine learning models with an option to abstain (or reject) dates back at least to \cite{chow1957optimum}; see \cite{hendrickx2024machine} for a modern survey. The cost of abstaining is usually modeled as a fixed cost (as we do), with the goal of a selective predictor being to minimize this modified loss function. Since it is typically nonconvex, many works have studied minimizing suitable surrogates instead (see e.g. \cite{cortes2016learning,mao2024theoretically}). The decision to abstain is made either based on the model's own confidence or a separately supervised ``rejector'' model. Recent literature in this area has proposed evaluating selective predictors in terms of their risk-coverage curves, and various works have studied optimal strategies and inherent tradeoffs under certain idealized settings \citep{el2010foundations,geifman2017selective}. In the abstention-only setting our router specializes to a confidence-based policy that is justified by a higher-order form of calibration.

\textbf{Learning to defer.} Here the goal is precisely to learn to defer to an oracle on inputs where there is a benefit to doing so; see \cite{stronglearning} for a survey. Routing to the oracle is modeled as incurring an instance-dependent cost which may be a function of oracle's loss, but may in general be arbitrary. For this reason learning to defer is often viewed as a generalization of selective prediction where the abstain option carries an adaptive or instance-dependent penalty rather than a fixed one. The focus of many papers in this area is on tractable approaches for minimizing the resulting modified cost function, or surrogates thereof, usually over a fixed model class (see e.g.~\cite{awasthi2022h,mohri2023learning,mao2023two}). In our setting we effectively consider the special case where the oracle has a known form, and the cost incurred upon routing is the oracle's loss plus a fixed routing penalty.

\textbf{Model cascades.} In this approach we run a suite of models in a cascade ordered by increasing cost, passing the output each time to an auxiliary verifier or quality estimation model, and stop when the output meets a desired threshold (see e.g. \cite{enomoro2021learning}). In this sense cascading may be considered a sequential learning to defer problem. %

\textbf{Orchestration.} Unlike our setting, orchestration considers a suite of models with complementary strengths, and the goal is to learn to route to the most suitable model for a given query. As there is no relationship between the models, this approach largely rests on using supervision to train a router. The Mixture of Experts (MoE) architecture can be seen as a version of this paradigm where the models (or experts) and router (or gating function) are trained in tandem. In some variants, routing is also done based on clustering queries in embedding space. See e.g.~\cite{dekoninck2024unified,jitkrittum2025universal} and references therein. %

\textbf{Loss Prediction, Omniprediction} The ability of our router to dynamically adapt to new losses and routing penalty draws on work on omnipredictors~\citep{gopalan2022omnipredictors}, which presents a way to learn a single predictor that can be post-processed to minimize any loss from a rich class of losses. Follow-up work has more explicitly made the connection between the ability to accurately estimate a model's loss and various calibration conditions \cite{gopalan2022loss, gollakota2025does, okoroafor2025nearoptimal}. Omniprediction has also been extended to the selective abstention setting by \cite{casacuberta2025selective}. Our work can be viewed as a higher-order extension of these ideas, allowing us not only to estimate the overall loss, but decompose it into irreducible and reducible components in a way that enables more principled routing decisions.

\textbf{Complementarity, Human-AI Collaboration}
Our work is also related to a line of theoretical results from Human-AI collaboration.
Works such as \citet{raghu2019algorithmic,donahue2022human, alur2024human, alur2024integrating} study ways of \emph{combining} human and algorithmic predictions.
Importantly, this requires understanding when and where predictions from different sources are \emph{complementary} and can be combined to improve overall decision quality.
In contrast, our work considers routing to an oracle model which we imagine to be strictly more powerful than the cheaper model everywhere.

\textbf{Routing in LLMs} 
There are a number of papers related to routing in the LLM literature \citep{dinghybrid, ding2025best,ongroutellm}. 
These works usually consider the problem of routing a user query to one of a number of different LLMs which may differ on quality, cost per token, or domain expertise.
The overall goal of LLM routing is to maximize performance subject to a given budget either per query or aggregated over the entire dataset.
\citet{jitkrittum2025universal} propose a router for an arbitrary set of LLMs which embeds both LLMs and prompts / queries in a particular feature space to make routing decisions.
Their approach treats the possible LLMs as a \emph{dynamic pool}, which then allows the router to still function without re-training when certain LLMs are deprecated, or new LLMs are added to the pool. \citet{somerstep2025carrot} point out that any optimal LLM routing approach should take into account both the predictive power of each LLM, as well as their cost per token.
They use this insight to develop algorithms and lower bounds for LLM routing based on predicting both cost and accuracy.
For a comprehensive survey on LLM routing, we refer to \citep{varangot2025doing}.
More generally, in contrast to our approach, most LLM routing papers rarely explicitly consider epistemic and aleatoric uncertainty decompositions, and simply optimize the pareto-frontier of certain performance-cost curves.

\section{Problem Formulation}
\subsection{The Routing Framework} We consider a classification setting with instance space $\calX$ and discrete label space $\calY$, with the conditional distribution of $y | x$ denoted by $\pbayes(x) \in \DY$ (the Bayes-optimal predictor). Here $\DY$ denotes the space of distributions over $\calY$. We also assume access to a downstream target distribution $\calD_{\calX}$ that inputs $x$ follow during deployment. This distribution is primarily for calibration purposes, and may differ from the original training distribution for the weak model.

We consider two predictive sources: a low cost, potentially error-prone \textbf{weak model} $\pred: \calX \rightarrow \DY$, and a high-cost \textbf{oracle} $\pbayes: \calX \rightarrow \DY$ providing the true conditional distribution\footnote{We note that our framework and results can be extended to a more general pool of oracles whose expected loss can be computed as a function of $\pbayes$, a property we term \emph{Bayes-dependency}. See discussion in Appendix~\ref{app:bayes-dep-oracles} for details.}.
For any input $x$, the router must either \textbf{predict} using the weak model, \textbf{route} the query to the oracle, or \textbf{abstain} from making a prediction altogether (and potentially take fallback actions such as requesting clarification, flagging for later inspection, etc.).

The cost of these actions is determined by a \emph{routing configuration} $T = (L, \alpha, \beta)$, which specifies a bounded loss function $L: \DY \times \DY \rightarrow [0, B]$, a routing penalty $\alpha \geq 0$, and abstention penalty $\beta \geq 0$. Here $L\divx{\distr^*}{\distr} := \ex_{y \sim \distr^*}[\ell(y, \distr)]$ denotes the expected loss of prediction $\distr \in \DY$ on a point with ground truth distribution $\distr^* \in \DY$. The total cost of each action is a combination of computational overhead and prediction error:
\begin{itemize}
    \item \textbf{Predict:} Using the weak model incurs no computational overhead. The cost is determined solely by the expected prediction loss, $L\divx{\pbayes(x)}{ \pred(x)}$.
    \item\begin{sloppypar} \textbf{Route:} Querying the oracle yields the optimal prediction $\pbayes(x)$, which incurs loss $L\divx{\pbayes(x)}{\pbayes(x)}$. However, this incurs a fixed computational overhead of $\alpha \geq 0$, representing the additional time or resources required to query the stronger model.\end{sloppypar}
    \item \textbf{Abstain:} Declining to predict incurs no computational overhead but results in a fixed penalty of $\beta \geq 0$, representing the cost of failing to provide a response.
\end{itemize}

The total cost for an action $a \in \{P, R, A\}$ (representing ``Predict,'' ``Route,'' and ``Abstain'') given an $x \in \calX$ and configuration $T = (L, \alpha, \beta)$ is thus:

\begin{align*}
    \mathsf{Cost}(x, a; T) := 
    \begin{cases}
        L\divx{\pbayes(x)}{\pred(x)} & a = P\\
        L\divx{\pbayes(x)}{\pbayes(x)} + \alpha & a = R\\
        \beta & a = A
    \end{cases}
\end{align*}

\subsection{Optimal Routing and Loss Decomposition}\label{sec:optimal-routing}
The \emph{pointwise-optimal} routing policy $r^*$ for a configuration $T = (L, \alpha, \beta)$ is the policy that selects the cost-minimizing action for every $x \in \calX$:
\[r^*(x, T) := {\arg\min}_{a \in \{P, R, A\}}\mathsf{Cost}(x, a; T).\]
We now derive the exact form of the optimal router by analyzing the relationship between the weak model's error and the oracle's error. For simplicity, we focus on the case where $L$ is a \emph{proper loss}, meaning that predicting the true conditional distribution minimizes the expected loss, but the general case can be easily reduced to the proper case (see \cref{sec:loss-explanation} for details).

Given an input $x$ with true conditional distribution $\pbayes(x)$, the weak model predicts $\pred(x)$ while the oracle predicts $\pbayes(x)$. Thus the weak model's loss is $L\divx{\pbayes(x)}{\pred(x)}$, while the oracle's loss is $L\divx{\pbayes(x)}{\pbayes(x)}$. For a proper loss, the latter is no greater than the former, and so we may decompose the weak model's loss into two key components:
\begin{enumerate}
    \item The \textbf{Irreducible Loss} $L\divx{\pbayes(x)}{\pbayes(x)}$: This captures the loss incurred due to inherent noise in the label distribution that even the oracle cannot eliminate.\footnote{In fact, $L\divx{\pbayes(x)}{\pbayes(x)}$ is precisely the \emph{entropy} $H(\pbayes(x))$ associated with the proper loss $L$. For example, when $L$ is the cross-entropy loss, $H$ is the Shannon entropy.}
    \item The \textbf{Reducible Loss} $L\divx{\pbayes(x)}{\pred(x)} - L\divx{\pbayes(x)}{\pbayes(x)}$: This represents the excess loss due to the weak-predictor's sub-optimality, and is exactly the expected difference in loss incurred by trusting the weak model's predictions vs.\ querying the oracle.
\end{enumerate}

The amount of irreducible loss and reducible loss precisely characterizes which routing action is optimal. Simplifying the cost minimization expression, we can characterize the decisions of the optimal policy in terms of the reducible ($\RL$) and irreducible ($\IL$) loss.

This is illustrated in Figure~\ref{fig:optimal_router_landscape}. See \cref{app:optimal-routing-decisions} for an explanation on how to derive these optimal actions in terms of irreducible and reducible loss.  %

The optimal policy demonstrates that routing is ineffective for points with high irreducible loss. This explains why naive confidence-based routers often route wastefully; in such cases, the optimal action is typically to abstain or simply trust the weak model.

\section{Prediction-Optimal Routing via Higher-Order Calibration}\label{sec:routing_from_hoc}
We have seen that the optimal routing decision depends on the \emph{reducible loss} at every point, namely the gap between the weak model's error and the irreducible or aleatoric error. Of course, computing this gap exactly at each point requires access to the true $\pbayes$, which we do not have without making an oracle call in the first place. In this section we define a more realistic benchmark for performance: the \emph{prediction-only optimal router}, which represents the best possible router whose decisions are a function of the weak model. We then show that \emph{higher-order calibration} \citep{ahdritzprovable} provides exactly the necessary uncertainty information to construct a flexible router that competes with the optimal prediction-only policy.

\subsection{Prediction-Only Routing}
The prediction-only optimal router, $r_f: \calX \times \calT \rightarrow \{P, R, A\}$, is the router that minimizes the routing cost while being restricted to taking the same action on all points given a certain prediction by $\pred$ (the ``level sets'' of $\pred$). The optimal action for a particular prediction is the one that minimizes the routing cost on average over the level set:

    \begin{align*} &r_{\pred}(x, T) := {\arg\min}_{a \in \{P, R, A\}} \ex_{x' \sim \calD_{\calX}}[\mathsf{Cost}(x', a; T) | \pred(x') = \pred(x)].
    \end{align*}

Our goal is to build a flexible router that achieves performance comparable to this relaxed benchmark.

\subsection{Why Higher-Order Calibration is Helpful}

In contrast to pointwise-optimal routing, determining the prediction-only optimal action only requires good estimates of the reducible loss \emph{on average over each level set}. This requirement is reminiscent of a calibration guarantee. As a reminder, $\pred$ is said to be calibrated if the average true label matches its predictions over every level set, e.g. 
\[\distr = \ex_{x \sim \calD_{\calX}, y \sim \pbayes(x)}[y | \pred(x) = \distr].\]
Existing work has shown that calibrated predictors can be used to construct good estimates of the average loss incurred over each level set~\citep{gollakota2025does, gopalan2022loss}. However, estimates of the total loss alone are not enough for our purposes --- we need accurate estimates of how the total loss is decomposed into its irreducible and reducible components. To disentangle them, we need the stronger notion of \emph{higher-order calibration}, introduced by~\citet{ahdritzprovable}.

Higher-order calibration is a property of a \emph{higher-order predictor} $\mixpred: \calX \rightarrow \DDY$. Unlike standard predictors that output a single label distribution, a higher-order predictor outputs a \emph{distribution over distributions}, quantifying uncertainty about the true $\pbayes(x)$ rather than just the label $y$.
Higher-order calibration is a form of calibration that guarantees that such higher-order predictions are meaningfully tied to the target distribution: 

\begin{definition}[Higher-Order Calibration, \cite{ahdritzprovable}]\label{def:hoc}
    A higher-order predictor $\mixpred: \calX \rightarrow \DDY$ is $\epsilon$-\emph{higher-order calibrated} if for every value $\mix \in \DDY$ in the range of $\mixpred$, the predicted mixture $\mix$ is close to the \emph{Bayes-mixture} $\mixbayes$ corresponding to the distribution of $\pbayes(x)$ for $x \sim \calD_{\calX}$, conditioned on $\mixpred(x) = \mix$. Formally, for every $\mix \in \text{range}(\mixpred)$, $\epsilon$-higher-order-calibration requires that $W_1(\mix, \mixbayes) \leq \epsilon$. Here, $W_1$ denotes the Wasserstein distance with respect to the $\ell_1$ norm.
\end{definition}

\subsection{Main Result: Higher-Order Calibration Enables Optimal Routing}
We now show that higher-order calibration is exactly the property needed to construct accurate estimates of the irreducible and reducible loss, on average over level sets. These estimates enable us to make near-optimal routing decisions compared to the prediction-only routing policy. 

We make the simplifying assumption that the higher-order predictor, $\mixpred$, shares the same level sets as the weak predictor, $\pred$. This means $\pred$ yields constant predictions across any level set of $\mixpred$. Such a condition arises naturally if $\mixpred$ is obtained through post-hoc calibration of the level sets of $\pred$. We will explore ways to relax this assumption later in the text. With this assumption in hand, we present our main result:

\begin{restatable}[Higher-Order Calibration Implies Low Prediction-Only Routing Regret]{thm}{hocimpliesrouting}
    \label{thm:hoc_implies_routing}
    Suppose we have an $\epsilon$-higher-order calibrated predictor $\mixpred$ whose level sets match those of $\pred$. Then we can construct a flexible router $r$ based on $\mixpred$ with the following near-optimality property. For any routing configuration $T = (L, \alpha, \beta)$ where $L$ is a $B$-bounded proper loss and $\alpha, \beta \geq 0$, we have \[\ex_{x \sim \calD_{\calX}}[\mathsf{Cost}(x, r(x, T); T) - \mathsf{Cost}(x, r_{\pred}(x, T); T)] \leq B\epsilon.\]
\end{restatable}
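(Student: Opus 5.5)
The plan is to build the router as a plug-in rule: on each level set, estimate the expected cost of each action from the predicted mixture $\mix = \mixpred(x)$, then pick the action with the smallest estimate. Write $v = \pred(x)$, which is constant on the level set of $\mixpred$ by assumption. For $\distr \in \DY$ define the three per-mixture quantities
\[
\hat C_P(\mix) := \ex_{\distr \sim \mix}[L\divx{\distr}{v}], \qquad \hat C_R(\mix) := \ex_{\distr \sim \mix}[L\divx{\distr}{\distr}] + \alpha, \qquad \hat C_A := \beta,
\]
and set $r(x,T) := \arg\min_a \hat C_a(\mixpred(x))$. These depend only on $\mixpred$ and $T$, so the router is flexible: no retraining is needed as $T$ changes. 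The true level-set costs $C_a(\mix) := \ex[\mathsf{Cost}(x',a;T) \mid \mixpred(x')=\mix]$ are obtained by the same formulas with $\mix$ replaced by the Bayes mixture $\mixbayes$. This is because $\pbayes(x')$ conditioned on $\mixpred(x')=\mix$ is distributed as $\mixbayes$, and $\pred(x') = v$ on that set. Since the level sets of $\mixpred$ and $\pred$ coincide, $r_{\pred}$ picks $\arg\min_a C_a(\mix)$ on each such set.

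The key step is to show $|\hat C_a(\mix) - C_a(\mix)| \le B\epsilon/2$ for each $a$. By Kantorovich--Rubinstein duality, this follows once each integrand $\distr \mapsto L\divx{\distr}{v}$ and $\distr \mapsto L\divx{\distr}{\distr}$ is $(B/2)$-Lipschitz in $\ell_1$.

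For the first integrand this is direct. $L\divx{\distr}{v} = \sum_y \distr(y)\ell(y,v)$ is linear in $\distr$ with coefficients in $[0,B]$. For two distributions, $\distr - \distr'$ sums to zero, so we may shift the coefficients by $B/2$ to center them, which gives $|L\divx{\distr}{v} - L\divx{\distr'}{v}| \le (B/2)\|\distr-\distr'\|_1$.

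For the entropy term $H(\distr) = L\divx{\distr}{\distr}$, properness gives
\[
H(\distr) - H(\distr') \le L\divx{\distr}{\distr'} - L\divx{\distr'}{\distr'} \le (B/2)\|\distr - \distr'\|_1,
\]
and the symmetric argument bounds the other direction. The constant $\alpha$ cancels, and the abstain cost is exact. Hence on each level set $|\hat C_a - C_a| \le B\epsilon/2$ for every $a$.

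The conclusion is the standard argmin-perturbation argument. If $\hat a$ minimizes $\hat C$ and $a^*$ minimizes $C$, then
\[
C_{\hat a} \le \hat C_{\hat a} + B\epsilon/2 \le \hat C_{a^*} + B\epsilon/2 \le C_{a^*} + B\epsilon.
\]
Averaging over the level sets of $\mixpred$ gives the stated bound. The main obstacle is the Lipschitz constant. If I only managed a constant of $B$ rather than $B/2$, the bound would come out as $2B\epsilon$, so the centering and properness tricks above are essential to get exactly $B\epsilon$. A secondary technical point is measurability and conditioning for a continuous range of $\mixpred$. I would handle it by working with the conditional distributions, or by assuming $\mixpred$ has finite range as post-hoc calibration produces.
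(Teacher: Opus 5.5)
Your proposal is correct and follows the paper's proof essentially step for step. It uses the same plug-in router minimizing the simulated cost under the predicted mixture, and the same per-action $B\epsilon/2$ estimation bound via Kantorovich--Rubinstein duality, with the centering trick and properness supplying $B/2$-Lipschitzness of both the total-loss and entropy terms. It then finishes with the identical argmin-perturbation chain averaged over level sets to reach $B\epsilon$.
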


The proof of this result can be found in Appendix ~\ref{sec:hoc_implies_routing_pf}. Note that we do not require anything about the predictions of the weak model, such as calibration.

\section{Practical Higher-Order Predictors}\label{sec:practical-discussion}
In \cref{sec:routing_from_hoc}, we established higher-order-calibrated predictors as a key tool that can be leveraged to build a flexible routing system that can adapt to different configurations $T = (L, \alpha, \beta)$. In this section, we discuss how to construct such a higher-order calibrated predictor in a practical manner. This requires addressing two engineering challenges: (1) collecting a suitable calibration post-processing dataset, and (2) designing an efficient partition/binning of the input space. Our approach is based on methods for post-hoc higher-order calibration proposed by \citet{ahdritzprovable}. We refer the reader to their paper for further discussion of alternative higher-order calibration methods. 

\subsection{Basic Higher-Order Calibration Approach}
To estimate the mixture of conditional expectations $\pbayes(x)$ within a level set, we cannot rely on standard single-label validation data. 
We instead require a \emph{$k$-snapshot calibration set}: a small dataset where each input $x$ is paired with \emph{multiple} independent labels sampled as $y_1, \dots, y_k \sim \pbayes(x)$ (e.g., multiple expert annotations). The mean of a $k$-snapshot, which we denote $\overline{y} := \frac{1}{k}\sum_{i = 1}^k y_i \in \DY$ (where the labels $y_i$ are viewed as one-hot vectors in $\DY$), serves as an estimate of the true conditional distribution $\pbayes(x)$. The core of our method lies in using the \emph{empirical mixture} over such $\overline{y}$ as an estimate of the true mixture of $\pbayes(x)$ over a particular bin.

Specifically, our post-hoc higher-order calibration method uses a ``Bin-and-Estimate'' procedure:
\begin{enumerate}
\item \textbf{Partition:} Define a partition $\calP$ of the input space $\calX$. These are the ``bins'' (or level sets) on which we will make decisions. 
\item \textbf{Estimate:} For each bin $P \in \calP$, save the empirical distribution (aka normalized histogram) of (prediction, snapshot mean) tuples $(\pred(x), \overline{y})$ falling within that bin. We denote this empirical distribution $S_P$. Note that the empirical distribution of $\overline{y}$ is precisely a mixture distribution; what we save as $S_P$ can be thought of as a \emph{tagged} mixture where the mixture components are tagged with an $f(x)$ value.

\item \textbf{(Optional Re-Calibration):} Update the values of $\pred$ to be the centroid of the true labels in its corresponding bin. For $x \in P$,
\[\pred(x) := \frac{1}{|S_P|}\sum_{(\pred(x'),\overline{y}) \in S_P}\overline{y}.\]
Make sure to update the saved predictions in $S_P$ as well. 

\item \textbf{Route:} For a new query $x$ and for any given routing configuration $T = (L, \alpha, \beta)$, find the bin $P$ containing $x$. Then use the saved empirical distribution $S_P$ to estimate the expected reducible and irreducible loss, and make the appropriate decision to predict, route, or abstain:
\[\hat{\IL} := \frac{1}{|S_P|}\sum_{(\pred(x'), \overline{y}) \in S_P}L(\overline{y},\overline{y}) 
\quad\quad\quad \hat{\RL} := \frac{1}{|S_P|}\sum_{(\pred(x'), \overline{y}) \in S_P}L(\overline{y}, \pred(x')) - \hat{IL}.\]

\[r(x, T) = {\arg\min}_{a \in \{P, R, A\}}\begin{cases}
    \hat{\IL} + \hat{\RL} & a = P\\
    \hat{\IL} + \alpha & a = R\\
    \beta & a = A
\end{cases}\]
\end{enumerate}

Notice that the statistics saved (namely $S_P$ for each bin $P$) are completely independent of the routing configuration, which enters only in step 4. This is the source of the flexibility of our approach: for any given configuration, step 4 can be carried out cheaply using the saved statistics.

\begin{remark}
While the empirical mean $\frac{1}{k}\sum_{i = 1}^k y_i$ concentrates around $\pbayes(x)$ for large $k$, using it as a proxy when $k$ is small yields biased loss estimates. In such cases, we recommend the unbiased estimation techniques from \cite{ahdritzprovable} (e.g. Theorem 4.3) to accurately decompose the irreducible and reducible loss. For our experiments we work with the empirical mean for simplicity. 
\end{remark}

\subsection{Selecting a Partition}
The success of this framework hinges on the choice of partition $\calP$, which governs the tradeoff between granularity and estimation stability. While fine-grained partitions (e.g. exact level sets) allow for precise routing decisions, they often suffer from data sparsity, meaning that there may not be sufficient data to have good estimates of the loss decomposition in each partition (especially in multiclass prediction, since the number of level sets scales exponentially with the number of classes). Coarser partitions ensure each bin has enough samples for a stable estimate, but they force the router to apply the same policy to broad regions of the input space.  

Even restricting to partitions that are coarse enough to have enough samples for each cell, many ways to divide the space remain. At a high level, a good partition should try to \emph{minimize the variance of the reducible and irreducible loss} within each cell. We offer a formalization of this intuition in Appendix~\ref{app:choosing-good-partitions} along with a discussion of recommended strategies for selecting buckets.

\section{Experiments}\label{sec:experiments}

In this section, we implement our routing framework following the method described in Section~\ref{sec:practical-discussion}, and study its performance on a number of synthetic and real-world datasets. 

Direct comparison to existing baselines is nuanced, as our method targets a broader scope with three primary advantages. First, unlike confidence-based methods that consider only total uncertainty, our framework routes effectively under irreducible uncertainty, avoiding unnecessary oracle queries. Second, it offers flexibility, providing optimal routing for any proper loss without retraining, whereas supervised methods are fixed during training. Finally, it supports a richer action space---predicting, routing, or abstaining---while existing frameworks are generally limited to two options and are difficult to extend.

To demonstrate these different benefits of our routers, we divide our experiments into three parts:
\begin{enumerate}
    \item \textbf{Routing with Irreducible Uncertainty}: In this section, we demonstrate our router's ability to distinguish between hard points with high reducible loss that should be routed, vs ambiguous points with high irreducible loss that will not benefit from routing, a distinction that confidence-based methods fail to make. For this section, we focus on the simplified routing setting where the available actions are only to route or predict. 
    \item \textbf{Loss Flexibility}: In this section, we study the brittleness of supervised methods in settings with dynamically changing losses. We show that our method can adapt without the need for retraining to a variety of loss functions, while supervised methods degrade on losses that differ from the fixed configuration used during training. 
    \item \textbf{Three-way Routing Decisions}: In this section, we show that our method can gracefully expand beyond 2-way decisions (route vs. predict) to accommodate a third option: abstention. 
\end{enumerate}

\subsection{Experiment Setup}

\subsubsection{Datasets and Model Architectures}
We evaluate our method on a synthetic binary prediction task and three real-world datasets characterized by inherent human label ambiguity: CIFAR-10H, SNLI, and ChaosNLI. 
For each dataset, we train a ``weak predictor'' $\pred$ whose prediction we will use when deciding whether to route to an oracle or not. See \cref{sec:dataset-details} for detailed information on each dataset and its associated weak predictor.

\paragraph{Oracle Predictor ($\pred^*$)} For each dataset and tested loss, we compute the loss of the oracle predictor using the label distribution defined per dataset.
For example, if we are testing CIFAR-10H with the cross entropy loss, the oracle loss would be the entropy of the label distribution of the 47 human annotations on the particular input.
We have intentionally chosen datasets with varying amounts of intrinsic aleatoric uncertainty. Therefore, the oracle loss will usually be larger than zero, implying some disagreement exists between human labelers.

\paragraph{Routing Framework} We implement our routing framework using the post-hoc ``Bin-and-Estimate'' procedure detailed in Section~\ref{sec:practical-discussion}. To ensure sample efficiency, we employ a top-class binning approach, defining partitions based on the predicted class and its corresponding confidence score. For each class, we divide these probabilities into 10 quantile-based buckets, ensuring an equal distribution of calibration points across bins.

\subsubsection{Baselines and Evaluation}

\paragraph{Two-way Routing Curves}
While our method naturally handles three-way decisions (predict, route, or abstain), existing frameworks such as \citet{hendrickx2024machine,el2010foundations} are typically restricted to binary actions (predict/route or predict/abstain). To ensure a fair comparison, our first two experiments focus on these two-way settings using cost-agnostic routing curves, which plot the percentage of points routed ($x$-axis) against total loss ($y$-axis).

This evaluation is deliberately favorable to baselines: by sweeping across all possible thresholds, the curve effectively grants every method its optimal post-hoc operating point. While our framework is designed to explicitly optimize for specific penalties $\alpha$ and $\beta$, the curve treats these costs as unknown, selecting the loss-minimizing threshold for every point on the plot. Any strategy that orders points by priority can generate such a curve; superior strategies yield sharply declining curves, while random selection results in a linear decrease. 

Lastly, in our third experiment, we demonstrate that our method seamlessly adapts to three-way decisions and variable costs (scenarios where simple thresholding often fails).

\paragraph{Routing Strategies and Baselines}

We compare our method’s routing curves against several established strategies and optimal benchmarks:
\begin{itemize}
    \item \textbf{Route by Total Uncertainty}: Points are ranked by their predictive entropy $H(\pred(x)) = L(\pred(x), \pred(x))$, where $H$ corresponds to the target loss. Points with high entropy are prioritized for routing.
    \item \textbf{Supervised Router}: To evaluate our framework against a direct supervised alternative, we designed a regression-based router specifically for these tasks. This model --- an XGBoost Regressor --- is trained on the calibration dataset to predict the amount of reducible loss at each point, using the prediction $\pred(x)$ and input features $x$ to estimate the gap $\ell(y, \pred(x)) - \ell(y, \pbayes(x))$. While this represents a natural supervised approach to our routing objective, it is inherently limited in two ways. First, while it performs well for binary decisions (e.g., route vs. predict)—often matching or even outperforming our method on synthetic data—it cannot be directly extended to multi-way routing (e.g., adding an "abstain" option). Second, the regressor is tied to the specific loss used during training, whereas our method remains flexible across a variety of losses without retraining (see Benefit 2).
    \item \textbf{Point-wise Optimal Router}: An optimal baseline that orders points based on the true reducible loss, $L(f^* (x), f(x)) - L(f^* (x), f^* (x))$.
    \item \textbf{Bucket-Optimal Router} (occasionally referred to as \textbf{Prediction-Optimal Router} for simplicity): A relaxed baseline representing the best possible routing decisions that remain constant over the buckets used in our framework. This router measures the average reducible loss within each bucket on the test set and orders points accordingly.
\end{itemize}

\subsection{Benefit 1: Routing in the Presence of Irreducible Uncertainty}
For simplicity, we first consider a binary routing setting: the system must either predict using the base model or route to the oracle, which in this case predicts the true ground-truth conditional probability. In this simplified setting, the optimal router routes any point with reducible loss that is above a particular threshold. As discussed in \cref{sec:optimal-routing}, routing strategies that fail to decompose the amount of reducible vs irreducible loss may fail in settings where the two quantities are de-coupled. 

We demonstrate this advantage in \cref{fig:johnson_routing}. In this synthetic environment, routing based on total uncertainty of even a well-calibrated predictor is significantly outperformed by our higher-order method. By explicitly estimating and routing based on the reducible loss, our framework avoids ``wasting'' the routing budget on inherently ambiguous points that the oracle cannot improve upon.

\paragraph{When Decomposition Matters} The practical benefit of separating irreducible and reducible loss is intrinsically tied to the joint distribution of these two quantities. We note that both the structure of the data and weak predictions as well as the bucket choice for binning-based methods such as ours can greatly affect to what degree decomposition is helpful for routing. See \cref{sec:decomp-discussion} for further discussion of these effects.

\paragraph{Observations on Real-World Datasets} We extended this analysis to several real-world datasets characterized by human label ambiguity: CIFAR-10H, SNLI, and ChaosNLI. Interestingly, we observed that in these datasets, total and irreducible loss remain highly correlated across the level sets of standard models. Consequently, decomposition did not yield the same dramatic performance gaps seen in our synthetic experiments.

This correlation persists in spite of non-trivial variation in the amount of irreducible loss in these datasets, suggesting that it is likely a product of either the training process or our bucketing approach rather than the data itself. We note that this finding aligns with prior observations by \cite{mucsanyi2024benchmarking}, who noted a high empirical correlation between aleatoric and epistemic uncertainties in trained models. 

We provide examples of this behavior in \cref{fig:routing_cifar10_recal,fig:routing_snli_recal,fig:routing_chaosnli_recal} on the CIFAR-10H, SNLI, and ChaosNLI datasets, respectively. When the base model is well-calibrated, most methods converge toward the bucket-optimal routing curve. This suggests that in some settings routing by total uncertainty may be a sufficient proxy for estimates of reducible loss. However, our synthetic demonstrations highlight that determining whether decomposition is important should be a key first step when implementing a router. 

\begin{remark}
    We observed that many of the models we trained were significantly miscalibrated when used out of the box, meaning that our method performed significantly better than the total uncertainty baseline. This highlights that our method can continue to perform well even in settings where the model is miscalibrated. However, the fact that we observe these benefits collapsing after calibrating the model suggests that most of the observed improvement comes from miscalibration, rather than a failure to decompose irreducible and reducible loss. 
\end{remark}

\subsection{Benefit 2: Loss Flexibility}

A key benefit of our framework is its flexibility. In particular, it can quickly adapt to new routing penalties, loss functions, and even oracles. In this section, we focus on its flexibility with respect to loss functions. We compare to a natural baseline -- a supervised ``loss predictor,'' which takes as input a prediction and features about an input, and outputs a prediction of the amount of reducible loss. 

In a static setting where the loss is fixed, this supervised approach tends to perform very well, even out-performing our partition-based method on some datasets. However, in dynamic settings where the loss we care about may change, the downside of this approach is that we must retrain a new loss predictor for each new loss. \cref{fig:loss_comp_snli_no_pp} depicts the potential loss in performance resulting from using a single statically trained loss predictor to make routing decisions on new losses (detailed in \cref{sec:loss-explanation}) on the SNLI dataset. In contrast, our method preserves performance guarantees across a variety of different loss functions. We present similar plots for all datasets in \cref{sec:loss-comp-plots}.

\begin{figure}
    \centering
    \includegraphics[width=\linewidth]{loss_snli_no_pp.png}
    \caption{Routing performances for different target loss functions for the SNLI dataset. The loss predictor baseline is a static loss predictor trained to predict the amount of reducible cross-entropy loss. While our method (HOC) performs roughly comparably to the loss predictor when routing based on cross-entropy loss, we see the performance of the loss predictor degrade when deployed on new losses further from the cross-entropy loss, whereas our method maintains good performance.}
    \label{fig:loss_comp_snli_no_pp}
\end{figure}

\subsection{Benefit 3: Effective Three-Way Decisions}


Thus far, we have focused on two-way decisions between prediction and routing. However, a key benefit of our method is that it can gracefully adapt to multi-way decisions beyond simply choosing whether to route or predict. In particular, in this section we demonstrate its ability to decide not only between prediction and routing, but also whether to abstain when the irreducible loss is too high, where the optimal decision for this setting is visualized in \cref{fig:optimal_router_landscape}. 

\begin{figure}[ht]
    \centering
    \includegraphics[width=0.6\linewidth]{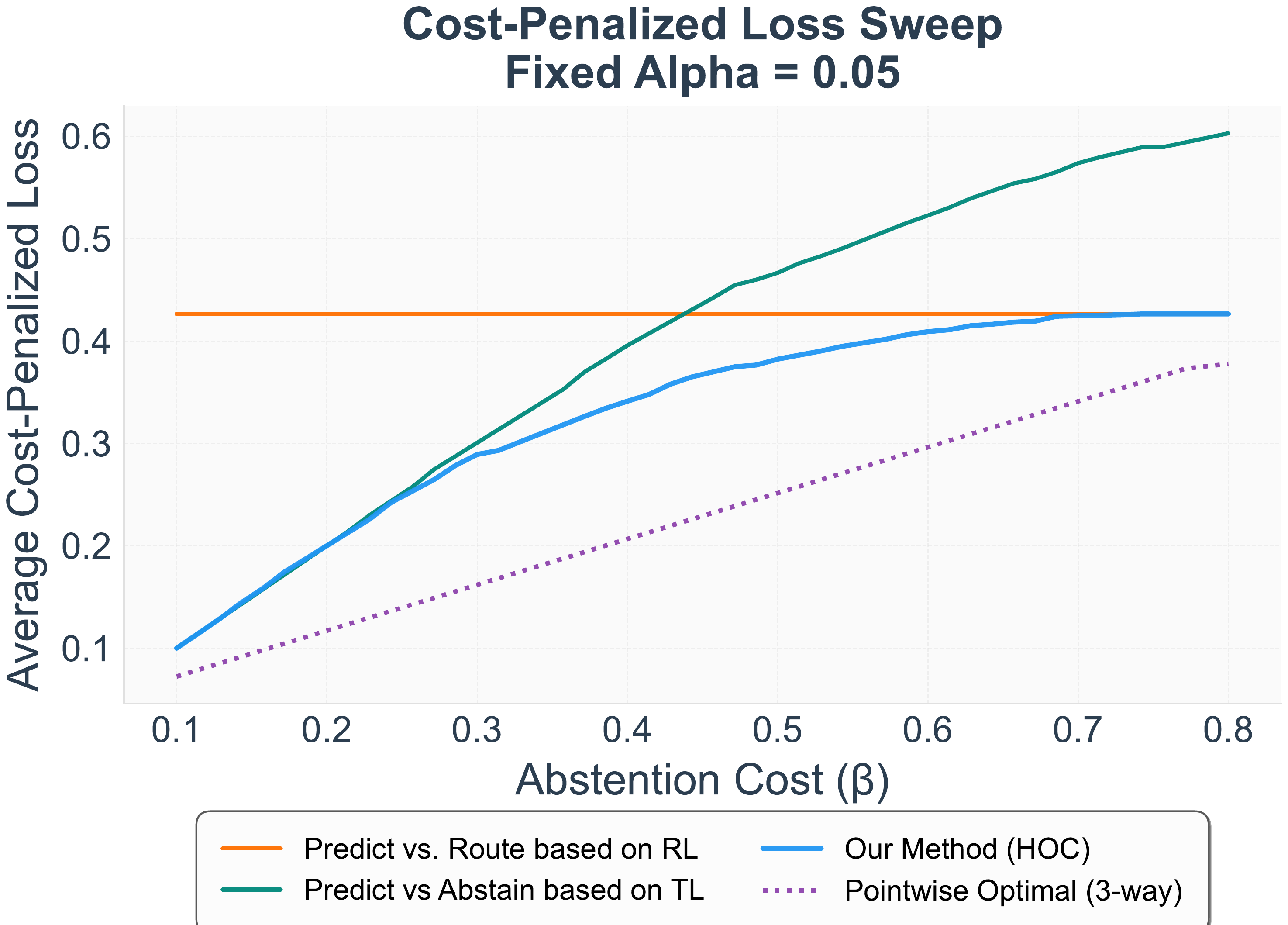}
    \caption{Cost-penalized performance of our method on the three-way decision task (Predict vs. Route vs. Abstain) on the SNLI dataset, for a fixed routing penalty of $\alpha = 0.05$ while sweeping the abstention penalty $\beta$. Our method gracefully adapts to the three-way decision setting, out-performing the two-way decision baselines of predict/route (orange) and predict/abstain (green).}
    \label{fig:snli_three_way}
\end{figure}

\cref{fig:snli_three_way} depicts the cost-penalized loss of our method on the SNLI dataset, with similar plots for other datasets included in \cref{sec:three-way-plots}. In this plot, the routing cost ($\alpha$) is kept fixed and the abstention cost ($\beta$) is swept from 0.1 to 0.8. We compare the full three-way power of our method against restricted two-way strategies (predict vs. route, and predict vs. abstain) using our calibrated estimates of the irreducible and total loss, respectively. As shown, the ability of our method to extend to three-way decisions allows it to achieve lower loss than a simpler framework that allows for only two options, effectively tracing the lower envelope of the two-way decision curves.

\section{Conclusions}
This work demonstrates the need for theoretically rigorous routing frameworks capable of adapting to unanticipated downstream desiderata such as new loss functions, costs, or routing oracles.  We highlighted the importance of uncertainty decomposition for optimal routing, and proposed a practical and provably effective routing mechanism based on higher-order calibrated predictors \citep{ahdritzprovable}, validated by empirical experiments. We view developing more efficient methods for learning high-quality higher-order-calibrated predictors as a key direction of future work that could further enhance the scalability of our framework.

\section*{Acknowledgments}
CP is supported by the Simons Foundation Collaboration on the Theory of Algorithmic Fairness and the Apple Scholars in AI/ML PhD fellowship. 

\bibliographystyle{plainnat}
\bibliography{ref}

\medskip

\appendix

\section{Useful Results about Losses}
In this section, we prove a few useful results about bounded proper losses that will be useful in the proofs used throughout the paper. Given a loss $\ell: \calY \times \DY \rightarrow [0, B]$, we denote its expectation over a conditional distribution as $L\divx{\distr^*}{\distr} = \ex_{y \sim \distr^*}[\ell(y, \distr)].$

We say that a function $g: \DY \rightarrow \R$ satisfies $L$-Lipschitz continuity if for every $\distr_1, \distr_2 \in \DY$, 
\[g(\distr_1) - g(\distr_2) \leq L\|\distr_1 - \distr_2\|_1.\]

The first lemma shows that any bounded proper loss satisfies Lipschitz continuity in its first argument. 
\begin{lemma}\label{lem:lipschitz-pstar}
    Let $L$ be a proper loss with range $[0, B]$. For any $\distr_1, \distr_2, q \in \DY$, $L$ satisfies
    \[\left|L\divx{\distr_1}{q} - L\divx{\distr_2}{q}\right| \leq \frac{B}{2}\|\distr_1 - \distr_2\|_1.\]
\end{lemma}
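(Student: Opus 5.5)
The plan is to use only the linearity of $L\divx{\cdot}{q}$ in its first argument together with the boundedness of $\ell$; properness should not be needed at all. Fix $q \in \DY$ and write the expected loss as a finite sum,
\[
L\divx{\distr_1}{q} - L\divx{\distr_2}{q} \;=\; \sum_{y \in \calY} \bigl(\distr_1(y) - \distr_2(y)\bigr)\,\ell(y, q).
\]
Applying Hölder's inequality directly would give the bound $B\|\distr_1-\distr_2\|_1$. To gain the factor $\tfrac12$, the key step is to recenter $\ell$ before bounding.

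\textbf{Recentering.} Since $\distr_1$ and $\distr_2$ are both probability distributions, $\sum_y (\distr_1(y)-\distr_2(y)) = 0$. Hence for any constant $c$ we may replace $\ell(y,q)$ by $\ell(y,q) - c$ without changing the sum. I will choose $c = B/2$, the midpoint of the range $[0,B]$. Then $|\ell(y,q) - B/2| \le B/2$ for every $y$, because $\ell$ takes values in $[0,B]$.

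\textbf{Bounding.} With the recentered loss, the triangle inequality gives
\[
\left|\sum_y (\distr_1(y)-\distr_2(y))\bigl(\ell(y,q)-\tfrac B2\bigr)\right| \;\le\; \tfrac B2 \sum_y |\distr_1(y)-\distr_2(y)| \;=\; \tfrac B2 \|\distr_1-\distr_2\|_1,
\]
which is the claimed inequality.

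The only nonobvious step is the recentering by $B/2$, which uses that the difference of two distributions has total mass zero. Everything else is routine. If $\calY$ were infinite, the same argument would go through with integrals in place of sums, but the paper assumes $\calY$ is discrete, so finite (or countable, absolutely convergent) sums suffice.
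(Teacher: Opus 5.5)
Your proof is correct and is essentially the paper's: both use that $\distr_1 - \distr_2$ has total mass zero to shift $\ell(\cdot, q)$ by a constant before applying H\"older, and neither needs properness. The only difference is the constant: the paper centers at the midpoint of $\max_y \ell(y,q)$ and $\min_y \ell(y,q)$, giving half the oscillation of $\ell(\cdot,q)$ as an intermediate bound. You center at $B/2$, which reaches the same final constant more directly.
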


\begin{proof}
    Let $H'(q)$ denote the vector such that $H'(q)_y := \ell(y, q)$ for each $y \in \calY$. We use this vector to equivalently re-write our loss expression as an inner-product over $\calY$:
    \begin{align*}
        |L\divx{\distr_1}{q} - L\divx{\distr_2}{q}| = \left|\langle\distr_1 - \distr_2, H'(q) \rangle\right|.
    \end{align*}
    We note that because $\distr_1$ and $\distr_2$ are both probability distributions over $\calY$, we have that  $\langle \distr_1 - \distr_2, \mathbf{1}\rangle = 0$, and so for any $c \in \R$, we have 

    \begin{align*}
        \left|\langle\distr_1 - \distr_2, H'(q) \rangle\right| &= \left|\langle\distr_1 - \distr_2, H'(q) - c\mathbf{1} \rangle\right|\\
        &\leq \|\distr_1 - \distr_2\|_1 \|H'(q) - c\mathbf{1}\|_{\infty}
    \end{align*}

    We select the $c$ that minimizes $\|H'(q) - c\mathbf{1}\|_\infty$, $c = \frac{1}{2}\left(\max_{y \in \calY}H'(q)_y + \min_{y' \in \calY} H'(q)_{y'}\right)$. This choice of $c$ gives an infinity norm value of 
    \begin{align*}
        \|H'(q) - c\mathbf{1}\|_{\infty} &= \frac{1}{2}\left(\max_{y \in \calY}H'(q)_y - \min_{y' \in \calY} H'(q)_{y'}\right)\\
        &= \frac{1}{2}\left(\max_{y \in \calY}\ell(y, q) - \min_{y' \in \calY} \ell(y', q)\right) \tag{Definition of $H'(q)$}\\
        &\leq \frac{B}{2} \tag{Range of $\ell$ is $[0, B]$.}\\
    \end{align*}

    Substituting this back into our upper bound, we conclude that 
    \[\left|L\divx{\distr_1}{q} - L\divx{\distr_2}{q}\right| \leq \frac{B}{2}\|\distr_1 - \distr_2\|_1.\]
\end{proof}

We use Lemma~\ref{lem:lipschitz-pstar} to prove that two loss-related quantities satisfy $B/2$-Lipschitz continuity. 
\begin{lemma}\label{lem:lip-tl-ir}
    For any fixed $\distr \in \DY$ and bounded proper loss $L:\DY \times \DY \rightarrow [0, B]$, the functions $g_{TL}(\distr^*) = L\divx{\distr^*}{\distr}$ and $g_{IR}(\distr^*) = L\divx{\distr^*}{\distr^*}$ satisfy $B/2$-Lipschitz continuity.
\end{lemma}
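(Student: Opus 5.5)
The claim about $g_{TL}$ is a direct application of Lemma~\ref{lem:lipschitz-pstar}. Fix $\distr$ and take $q = \distr$, with $\distr_1, \distr_2$ playing the role of the two ground-truth distributions. The lemma then gives
\[
g_{TL}(\distr_1) - g_{TL}(\distr_2) \leq \left|L\divx{\distr_1}{\distr} - L\divx{\distr_2}{\distr}\right| \leq \frac{B}{2}\|\distr_1 - \distr_2\|_1,
\]
which is exactly $B/2$-Lipschitz continuity in the one-sided sense used in the paper's definition.

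For $g_{IR}$ the prediction argument moves together with the ground truth, so the lemma cannot be applied directly. I will use properness to decouple the two. Take any $\distr_1, \distr_2 \in \DY$. Since $L$ is proper, $\distr_1$ minimizes $q \mapsto L\divx{\distr_1}{q}$, so in particular $L\divx{\distr_1}{\distr_1} \leq L\divx{\distr_1}{\distr_2}$. This yields
\[
g_{IR}(\distr_1) - g_{IR}(\distr_2) = L\divx{\distr_1}{\distr_1} - L\divx{\distr_2}{\distr_2} \leq L\divx{\distr_1}{\distr_2} - L\divx{\distr_2}{\distr_2}.
\]
The right-hand side has a fixed second argument $q = \distr_2$, so Lemma~\ref{lem:lipschitz-pstar} bounds it by $\frac{B}{2}\|\distr_1 - \distr_2\|_1$. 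The definition only asks for this one-sided inequality, and the reverse direction follows by swapping $\distr_1$ and $\distr_2$, so the absolute-value version holds as well.

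The only non-mechanical step is this use of properness to replace $L\divx{\distr_1}{\distr_1}$ by $L\divx{\distr_1}{\distr_2}$. It is the reason $g_{IR}$, the entropy associated with $L$, inherits the same constant $B/2$ even though it is concave rather than linear in $\distr^*$. Both arguments use only properness and the bound $\ell \in [0,B]$.
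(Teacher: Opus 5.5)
Your proposal is correct and matches the paper's proof essentially step for step. For $g_{TL}$ you apply Lemma~\ref{lem:lipschitz-pstar} directly with $q = \distr$. For $g_{IR}$ you use properness to get $L(\distr_1, \distr_1) \le L(\distr_1, \distr_2)$ and then apply the same lemma with the second argument fixed at $\distr_2$.
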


\begin{proof}
    The $B/2$-Lipschitz continuity of $g_{TL}$ follows immediately from Lemma~\ref{lem:lipschitz-pstar}. For $g_{IR}$, we note that because $L$ is a proper loss, we have that for any $\distr_1$ and $\distr_2$, 
    \begin{align*}
        g_{IR}(\distr_1) - g_{IR}(\distr_2) &= L\divx{\distr_1}{\distr_1} - L\divx{\distr_2}{\distr_2}\\
        &\leq L\divx{\distr_1}{\distr_2} - L\divx{\distr_2}{\distr_2} \tag{$L$ is proper}\\
        &\leq \frac{B}{2}\|\distr_1 - \distr_2\|_1 \tag{Lemma~\ref{lem:lipschitz-pstar}}
    \end{align*}
    and thus $g_{IR}$ also satisfies $B/2$-Lipschitz continuity. 
\end{proof}

\section{Connecting Optimal Routing Decisions to Reducible Loss}\label{app:optimal-routing-decisions}
In this section, we provide more explanation on how the optimal routing decisions are determined from the amount of reducible and irreducible loss, as illustrated in Figure~\ref{fig:optimal_router_landscape}.

\begin{proposition}
    Given a point $x$ with irreducible loss and reducible loss denoted
    \[\IL := L(\pbayes(x), \pbayes(x)), \quad \RL := L(\pbayes(x), \pred(x)) - L(\pbayes(x), \pbayes(x)),\]
    respectively, the point-wise optimal routing decision for any $\alpha, \beta \geq 0$, 
    \[r^*(x, (L, \alpha, \beta)) = {\arg\min}_{a \in \{P, R, A\}}\mathsf{Cost}(x, a; (L, \alpha, \beta))\]
    is equivalent to the following decision framework in terms of $\IL$ and $\RL$:
    \begin{center}
    \begin{tikzpicture}[
        level 1/.style={sibling distance=7cm, level distance=2cm},
        level 2/.style={sibling distance=3.5cm, level distance=2.5cm},
        decision/.style={rectangle, draw, minimum width=3cm, minimum height=1cm, align=center, thick},
        outcome/.style={rounded corners, draw, fill=gray!10, minimum width=2cm, minimum height=0.8cm, align=center},
        edge label/.style={midway, fill=white, font=\small\itshape}
    ]
    
    \node [decision] {$RL \geq \alpha?$}
        child { node [decision] {$IL \geq \beta - \alpha?$}
            child { node [outcome] {Abstain} 
                edge from parent node[edge label] {Yes} }
            child { node [outcome] {Route} 
                edge from parent node[edge label] {No} }
            edge from parent node[edge label] {Yes}
        }
        child { node [decision] {$IL + RL \geq \beta?$}
            child { node [outcome] {Abstain} 
                edge from parent node[edge label] {Yes} }
            child { node [outcome] {Predict} 
                edge from parent node[edge label] {No} }
            edge from parent node[edge label] {No}
        };
    
    \end{tikzpicture}
    \end{center}
\end{proposition}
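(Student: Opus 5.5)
Rewriting the three costs in terms of $\IL$ and $\RL$ turns this into an elementary comparison of three numbers. For the point $x$ and configuration $(L,\alpha,\beta)$ we have
\[\mathsf{Cost}(x,P) = \IL + \RL,\qquad \mathsf{Cost}(x,R) = \IL + \alpha,\qquad \mathsf{Cost}(x,A) = \beta,\]
by the definitions of $\IL$ and $\RL$. The $\arg\min$ is then the smallest of these three numbers, and I will check that the decision tree finds it. I will adopt the tie-breaking convention implicit in the tree: weak inequalities send the decision toward Route over Predict, and toward Abstain over the other action.

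The first comparison is Predict versus Route:
\[\mathsf{Cost}(x,R)\le\mathsf{Cost}(x,P)\iff \IL+\alpha\le\IL+\RL\iff \RL\ge\alpha.\]
The $\IL$ terms cancel, so this comparison does not depend on $\IL$ at all. This justifies the root node of the tree.

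On the branch $\RL\ge\alpha$, Route weakly beats Predict, so the minimum is $\min\{\IL+\alpha,\beta\}$. Abstain is chosen exactly when $\beta\le \IL+\alpha$, that is, when $\IL\ge\beta-\alpha$, which gives the left subtree. On the branch $\RL<\alpha$, Predict strictly beats Route, so the minimum is $\min\{\IL+\RL,\beta\}$. Abstain is chosen exactly when $\IL+\RL\ge\beta$, which gives the right subtree. Since these cases cover everything, the two descriptions agree.

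There is no real obstacle here. The only delicate point is tie-breaking: the $\arg\min$ is a set when there are ties. I will state that on tie sets the tree selects one element of the $\arg\min$, so every choice it makes attains the minimal cost. Finally, I will note how this matches Figure~\ref{fig:optimal_router_landscape}. The line $\RL=\alpha$ separates Predict from Route, the vertical line $\IL=\beta-\alpha$ separates Route from Abstain, and the diagonal $\IL+\RL=\beta$ separates Predict from Abstain. These three lines meet at the point $(\beta-\alpha,\alpha)$.
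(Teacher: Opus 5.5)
Your proposal is correct and matches the paper's proof. You rewrite the three costs as $\IL+\RL$, $\IL+\alpha$, $\beta$, obtain the root split $\RL\ge\alpha$ from the Route-versus-Predict comparison, and on each branch compare the surviving action against Abstain; your explicit tie-breaking remark (the tree always returns some element of the $\arg\min$) makes precise a point the paper leaves implicit.
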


\begin{proof}
    Based on our definition of the routing cost, routing has a lower cost than prediction precisely when 
    \begin{align*}
        \mathsf{Cost}(x, R; T) &\leq \mathsf{Cost}(x, P; T)\\
        L(\pbayes(x), \pbayes(x)) + \alpha &\leq L(\pbayes(x), \pred(x))\\
        \alpha &\leq L(\pbayes(x), \pred(x)) - L(\pbayes(x), \pbayes(x)) \\
        \alpha &\leq \RL.
    \end{align*}

    Thus, this reconstructs the first layer of the tree -- routing is preferred over prediction by the optimal router if and only if $\RL \geq \alpha$. 

    Assuming $\RL \geq \alpha$, we conclude that the optimal router will either abstain or route. The property that abstention has a lower cost than routing can be simplified as follows:
    \begin{align*}
        \mathsf{Cost}(x, A; T) &\leq \mathsf{Cost}(x, R; T)\\
        \beta &\leq L(\pbayes(x), \pbayes(x)) + \alpha \\
        \beta - \alpha &\leq \IL
    \end{align*}

    Thus, we have recovered the lower left side of the tree -- assuming $\RL \geq \alpha$, the optimal router abstains whenever $\IL \geq \beta - \alpha$, and routes otherwise. 

    We finally consider the other case: $\RL \leq \alpha$. We know that in this case, the optimal routing policy will either predict or abstain. Abstention has a lower cost compared to prediction precisely when 
    \begin{align*}
        \mathsf{Cost}(x, A; T) &\leq \mathsf{Cost}(x, P; T)\\
        \beta &\leq L(\pbayes(x), \pred(x))\\
        \beta &\leq \RL + \IL.
    \end{align*}

    Thus, we have recovered the lower right side of the tree, and shown that the decisions are exactly the pointwise optimal routing decisions. 
\end{proof}

\section{Proofs and Additional Discussion from Section~\ref{sec:routing_from_hoc}}

\subsection{Proof of Theorem~\ref{thm:hoc_implies_routing}}\label{sec:hoc_implies_routing_pf}

In this section, we prove Theorem~\ref{thm:hoc_implies_routing}, which we restate here for readability:

\hocimpliesrouting*

The proof of the theorem relies on Lemma~\ref{lem:routing_loss_simulation}, which captures why higher-order calibration is so useful when constructing a routing framework. Intuitively, the lemma guarantees that a higher-order-calibrated prediction can be used to obtain good estimates of the routing cost for each potential routing decision. This means that choosing the action that minimizes our estimated cost will also be a good minimizer of the true routing cost.

For ease of exposition, it will be useful to define a generalized routing cost that takes in precise values of $\pbayes(x) = \distr^*$ and $\pred(x) = \distr$, rather than just $x$. We define the generalized cost as follows:
\[\mathsf{Cost}(\distr^*, \distr, a; T) := 
    \begin{cases}
        L\divx{\distr^*}{\distr} & a = P\\
        L\divx{\distr^*}{\distr^*} + \alpha & a = R\\ 
        \beta & a = A
    \end{cases}
\]
Under this new notation, $\mathsf{Cost}(x, a;T)$ can be written equivalently as $\mathsf{Cost}(\pbayes(x), \pred(x), a;T).$

Given a higher-order prediction $\mixpred$, its predictions can be used to estimate the routing cost at each point, where we can interpret the estimates as the routing loss that would be incurred if the true value of $\pbayes(x)$ was actually distributed according to the higher-order prediction:
\begin{definition}[Simulated Routing Cost]\label{def:simulated-routing-cost}
    Given a higher-order predictor $\mixpred$, we define the \emph{simulated routing cost} as 
    \[\hat{\mathsf{Cost}}_{\mixpred}(x, a; T) := \ex_{\distr^* \sim \mixpred(x)}[\mathsf{Cost}(\distr^*, \pred(x), a; T)].\]
\end{definition}

The following lemma establishes that the simulated routing loss is a good estimate for the true routing cost for any action, on average over each partition. 

\begin{lemma}[Routing Loss Prediction from Higher-Order Calibration]\label{lem:routing_loss_simulation}
    Let $\mixpred$ be a $\epsilon$-higher-order-calibrated predictor whose level sets match those of $\pred$ and let $T = (L, \alpha, \beta)$ be a routing configuration where $L$ is a $B$-bounded proper loss $L: \DY \times \DY \rightarrow [0, B]$, and $\alpha, \beta \geq 0$. Then, for any routing decision $a \in \{P, R, A\}$, and higher-order prediction $\mix \in \DDY$ in the range of $\mixpred$, we are guaranteed that 
    \begin{align*}
        \left|\ex_{x \sim \calD_{\calX}}[\mathsf{Cost}(x, a; T)|\mixpred(x) = \mix] - \hat{\mathsf{Cost}}_{\mixpred}(x, a; T)\right| \leq B\epsilon/2.
    \end{align*}
\end{lemma}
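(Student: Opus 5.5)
Throughout, fix a value $\mix$ in the range of $\mixpred$. Because the level sets of $\mixpred$ match those of $\pred$, every $x$ with $\mixpred(x) = \mix$ has the same weak prediction, which I will call $\distr := \pred(x)$. Consequently the simulated cost $\hat{\mathsf{Cost}}_{\mixpred}(x, a; T)$ takes the same value on the whole level set, namely $\ex_{\distr^* \sim \mix}[\mathsf{Cost}(\distr^*, \distr, a; T)]$. On the other side, write the true conditional average through the Bayes-mixture $\mixbayes$, the law of $\pbayes(x)$ given $\mixpred(x) = \mix$:
\[\ex_{x \sim \calD_{\calX}}[\mathsf{Cost}(x, a; T)\mid\mixpred(x)=\mix] = \ex_{\distr^* \sim \mixbayes}[\mathsf{Cost}(\distr^*, \distr, a; T)].\]
The lemma then reduces to bounding $\bigl|\ex_{\mixbayes} g_a - \ex_{\mix} g_a\bigr|$, where $g_a(\distr^*) := \mathsf{Cost}(\distr^*, \distr, a; T)$ is a function of the single argument $\distr^* \in \DY$ with $\distr$ held fixed.

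Next I would check that each $g_a$ is $B/2$-Lipschitz with respect to $\|\cdot\|_1$. I would treat the three actions separately.
\begin{itemize}
\item For $a = P$, $g_P = g_{TL}$, which is $B/2$-Lipschitz by Lemma~\ref{lem:lip-tl-ir}.
\item For $a = R$, $g_R = g_{IR} + \alpha$. Adding a constant does not change Lipschitzness, so the same lemma applies.
\item For $a = A$, $g_A \equiv \beta$ is constant, and the difference of expectations is exactly zero.
\end{itemize}
Lemma~\ref{lem:lip-tl-ir} is stated one-sidedly, as $g(\distr_1) - g(\distr_2) \le \frac{B}{2}\|\distr_1 - \distr_2\|_1$. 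Since it holds for all ordered pairs, swapping the roles of $\distr_1$ and $\distr_2$ gives the two-sided bound needed for Kantorovich--Rubinstein.

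Finally I would invoke Kantorovich--Rubinstein duality for $W_1$ with the $\ell_1$ ground metric. For any $K$-Lipschitz $g$, it gives $\bigl|\ex_{\mixbayes} g - \ex_{\mix} g\bigr| \le K\, W_1(\mix, \mixbayes)$. An equivalent elementary route avoids duality: take an optimal coupling $(\distr_1, \distr_2)$ of $(\mixbayes, \mix)$ and bound $\ex|g(\distr_1) - g(\distr_2)| \le \frac{B}{2}\ex\|\distr_1 - \distr_2\|_1$. With $K = B/2$ and the hypothesis $W_1(\mix, \mixbayes) \le \epsilon$ from Definition~\ref{def:hoc}, this yields the claimed bound $B\epsilon/2$.

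The only real care point is the first step: the simulated cost uses the same fixed $\distr$ for every $\distr^*$ in the mixture. This is exactly where the matching-level-sets assumption enters. Without it, the true conditional cost would average over varying $\pred(x)$, and the reduction to a single Lipschitz function of $\distr^*$ would fail. Everything else is routine.
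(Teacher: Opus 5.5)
Your proposal is correct and follows essentially the same route as the paper: fix a level set of $\mixpred$, use the matching-level-sets assumption to freeze $\pred(x)=\distr$, note that the abstain term vanishes and $\alpha$ cancels for routing, and combine the $B/2$-Lipschitz bounds of Lemma~\ref{lem:lip-tl-ir} with Kantorovich--Rubinstein duality to get $\frac{B}{2}W_1(\mixbayes,\mix)\le B\epsilon/2$. Your remark that the one-sided Lipschitz condition, applied to both orderings, gives the two-sided bound makes explicit a step the paper leaves implicit; the optional coupling argument is an equivalent way to finish.
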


We first prove the theorem using the lemma, and then present the proof of the lemma. 

\begin{proof}[Proof of Theorem~\ref{thm:hoc_implies_routing}]
    We define our router $r$ to take the action that minimizes the simulated routing loss computed using $\mixpred$ (Definition~\ref{def:simulated-routing-cost}).

    \[r(x, T) := {\arg\min}_{a \in \{P, R, A\}}\hat{\mathsf{Cost}}_{\mixpred}(x, a; T).\]

    Note that this implies $r$'s routing decisions are constant over each level set of the higher-order predictor (and equivalently $\pred$). 

    Leveraging Lemma~\ref{lem:routing_loss_simulation}, we now show that these actions are also good minimizers for the true routing loss, on average over partitions. Let $\mix \in \DDY$ be some value in the range of $\mixpred$, and fix a configuration $T = (L, \alpha, \beta)$ where $L$ is a $B$-bounded proper loss and $\alpha, \beta \geq 0$. Note that $\pred(x)$ is constant for all points $x$ with $\mixpred(x) = \mix$ because the two predictors have the same level sets. We denote this constant value as $\pred(x) = \distr$. We bound the routing loss over this level set as follows:
    \begin{align*}
        &\ex_{x \sim \calD_{\calX}}[\mathsf{Cost}(x, r(x, T); T)|\mixpred(x) = \mix] \\
        &\leq \hat{\mathsf{Cost}}_{\mixpred}(x, r(x, T); T) + B\epsilon/2 \tag{Lemma~\ref{lem:routing_loss_simulation}}\\
        &\leq \hat{\mathsf{Cost}}_{\mixpred}(x, r_{\pred}(x, T); T)  + B\epsilon/2 \tag{Definition of $r(x, T)$}\\
        &\leq \ex_{x \sim \calD_{\calX}}[\mathsf{Cost}(x, r_{\pred}(x, T); T)|\mixpred(x) = \mix] + B\epsilon \tag{Lemma~\ref{lem:routing_loss_simulation}}\\
    \end{align*}

    Thus, taking the average over all level sets of the higher-order predictor, we conclude that the excess routing cost is bounded:
    \[\ex_{x \sim \calD_{\calX}}[\mathsf{Cost}(x, r(x, T); T)] - \ex_{x \sim \calD_{\calX}}[\mathsf{Cost}(x, r_{\pred}(x, T); T)] \leq B\epsilon.\]

    Because this holds for any configuration $T$ satisfying the conditions of the theorem, the proof is complete. 
\end{proof}

We now finish up by proving our helper lemma. 

\begin{proof}[Proof of Lemma~\ref{lem:routing_loss_simulation}]
    It suffices to prove the statement for an arbitrary $T = (L, \alpha, \beta)$ and predicted mixture $\mix$, where $L$ is a $B$-bounded proper loss, and $\alpha, \beta \geq 0$. Let $\mixbayes$ denote the distribution over $\pbayes(x)$ induced by drawing $x \sim \calD_{\calX}$ conditioned on $\mixpred(x) = \mix$. Note that by our assumption that $\mixpred$ has the same level sets as $\pred$, $\pred(x)$ is constant for all points with $\mixpred(x) = \mix$. We denote this constant value $\pred(x) = \distr$.

    We bound the maximum difference in real vs simulated routing costs by taking a maximum over all possible routing decisions.  We use $\hat \distr \in \DY$ to denote an independent draw from the mixture $\mix$:
    \begin{align*}
        &\max_{a \in \{P, R, A\}}\left|\ex_{x \sim \calD_{\calX}}[\mathsf{Cost}(x, a; T)|\mixpred(x) = \mix] - \hat{\mathsf{Cost}}_{\mixpred}(x, a; T)\right|\\
        &= \max \{\left|\ex_{\distr^* \sim \mixbayes}[L\divx{\distr^*}{\distr}] - \ex_{\hat\distr \sim \mix}[L\divx{\hat\distr}{\distr}]\right|, \left|\ex_{\distr^* \sim \mixbayes}[L\divx{\distr^*}{\distr^*}] - \ex_{\hat\distr \sim \mix}[L\divx{\hat\distr}{\hat\distr}]\right|, 0\}
    \end{align*}

    We use the following dual definition of the 1-Wasserstein distance. Let $\mathcal{G}_{Lip}$ be the class of all $1$-Lipschitz functions $g: \DY \rightarrow \R$. We can equivalently write the Wasserstein distance between $\mix_1 \in \DDY$ and $\mix_2 \in \DDY$ as 
    \[W_1(\mix_1, \mix_2) = \max_{g \in \mathcal{G}_{Lip}}\ex_{\distr_1 \sim \mix_1}[g(\distr_1)] - \ex_{\distr_2 \sim \mix_2}[g(\distr_2)].\]

    We note that by Lemma~\ref{lem:lip-tl-ir}, the functions $g_{TL}(\distr^*) = L\divx{\distr^*}{\distr}$ and $g_{IR}(\distr^*) = L\divx{\distr^*}{\distr^*}$ both satisfy $B/2$-Lipschitzness, and thus we can further bound our maximum by a multiplicative factor of the maximum over all 1-Lipschitz functions:
    \begin{align*}
        \frac{B}{2}\max_{g \in \mathcal{G}_{Lip}}\ex_{\distr^* \sim \mixbayes}[g(\distr^*)] - \ex_{\hat\distr \sim \mix}[g(\hat\distr)]&= \frac{B}{2}W_1(\mixbayes, \mix) \tag{Wasserstein duality}\\
        &\leq B\epsilon/2 \tag{Higher-order calibration guarantee}
    \end{align*}

    Thus, we conclude that the difference between the true and estimated routing loss values is at most $B\epsilon/2$.
\end{proof}

\subsection{Generalizing to Additional Oracles}\label{app:bayes-dep-oracles}

While we state our main result for a particular routing setting where the oracle makes Bayes-optimal predictions, our proof techniques can naturally be extended to settings where we have access to multiple oracles, each with varying cost. The key requirement we make is that each oracle's loss at a point $x$ can be expressed as a known function of $\pbayes(x)$. We call such an oracle a \emph{Bayes-dependent oracle}. 

\begin{definition}[Bayes-Dependent Oracle]
    An oracle $g_{O}(x): \calX \rightarrow \DY$ is \emph{Bayes-Dependent} if there exists a function $c: \mathcal{L} \times \DY \rightarrow \R$ such that for any point $x \in \calX$, $L(\pbayes(x), g_{O}(x)) = c(L, \pbayes(x))$, i.e. the expected loss of the oracle at $x$ can be expressed purely as a function of $\pbayes(x)$ and the loss of interest. 
\end{definition}

We note that this definition captures a wide range of oracles that we might have access to in practice. As one example, a common alternative option to prediction might be to ask one or more human experts for their opinion on how to classify a particular point. Each human expert's opinion can be viewed as a draw $y \sim \pbayes(x)$ from the true conditional distribution. 

Thus, the expected loss of querying $k$ human experts and combining their opinions using some aggregation function $a: \calY^k \rightarrow \DY$ can be expressed as a function of $\pbayes(x)$:
\[c(L, \pbayes(x)) = \ex_{y_1, ..., y_k \sim \pbayes(x)}[L(\pbayes(x), a(y_1, ..., y_k))].\]

We can thus expand our routing setting to assume that we have access to $k$ oracles whose losses are computed by (known) functions $c_1, ..., c_k$. Our action set also expands to $\{P, A, R_1, ..., R_k\}$, where $R_i$ denotes routing to the $i$th oracle, and we have $k$ routing penalties $\alpha_1, ..., \alpha_k \geq 0$ in addition to the abstention penalty $\beta \geq 0$. 

In such a setting, we can naturally derive a generalized version of Lemma~\ref{lem:routing_loss_simulation}, where the simulated costs for action $R_i$ are now computed as 

\[\hat{\mathsf{Cost}}_F(x, R_i; T) := \ex_{\distr^* \sim F(x)}[c_i(L, \distr^*)] + \alpha_i.\]

\begin{lemma}\label{lem:generalized-simulated-cost}
    Let $\mixpred$ be a $\epsilon$-higher-order-calibrated predictor whose level sets match those of $\pred$ and let $T = (L, \alpha_{1:k}, \beta)$ be a routing configuration where $L$ is a $B$-bounded proper loss $L: \DY \times \DY \rightarrow [0, B]$, and $\alpha_{1:k}, \beta \geq 0$. Let each $c_i(L, \cdot)$ satisfy $\gamma$-Lipschitz continuity. 
    
    Then, for any routing decision $a \in \{P, R_1, ..., R_k, A\}$, and higher-order prediction $\mix \in \DDY$ in the range of $\mixpred$, we are guaranteed that 
    \begin{align*}
        \left|\ex_{x \sim \calD_{\calX}}[\mathsf{Cost}(x, a; T)|\mixpred(x) = \mix] - \hat{\mathsf{Cost}}_{\mixpred}(x, a; T)\right| \leq \max\{B/2, \gamma\}\epsilon.
    \end{align*}
\end{lemma}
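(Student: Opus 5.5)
The plan mirrors the proof of Lemma~\ref{lem:routing_loss_simulation}, with one extra case per oracle. Fix a configuration $T = (L, \alpha_{1:k}, \beta)$ and a mixture $\mix$ in the range of $\mixpred$. Let $\mixbayes$ be the law of $\pbayes(x)$ for $x \sim \calD_{\calX}$ conditioned on $\mixpred(x) = \mix$. Because $\mixpred$ and $\pred$ share level sets, $\pred(x)$ equals a single value $\distr$ on this level set. The real and simulated costs then both become expectations of the same function of $\distr^*$, taken under $\mixbayes$ and under $\mix$ respectively, so each action reduces to a Lipschitz estimate.

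The cases are as follows.
\begin{itemize}
\item For $a = A$, both costs equal $\beta$, so the difference is $0$.
\item For $a = P$, the difference is $\bigl|\ex_{\distr^* \sim \mixbayes}[L\divx{\distr^*}{\distr}] - \ex_{\hat\distr \sim \mix}[L\divx{\hat\distr}{\distr}]\bigr|$. By Lemma~\ref{lem:lip-tl-ir}, $g_{TL}$ is $B/2$-Lipschitz, so this is at most $\frac{B}{2} W_1(\mixbayes, \mix) \le B\epsilon/2$.
\item For $a = R_i$, the real cost on the level set is $\ex_{\distr^* \sim \mixbayes}[c_i(L, \distr^*)] + \alpha_i$. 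This uses the Bayes-dependence identity $L(\pbayes(x), g_{O_i}(x)) = c_i(L, \pbayes(x))$ pointwise, followed by the conditional expectation. The simulated cost is $\ex_{\hat\distr \sim \mix}[c_i(L, \hat\distr)] + \alpha_i$. The $\alpha_i$ terms cancel, and since $c_i(L,\cdot)$ is $\gamma$-Lipschitz, Kantorovich--Rubinstein duality bounds the difference by $\gamma W_1(\mixbayes, \mix) \le \gamma \epsilon$.
\end{itemize}

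Taking the maximum over actions gives a bound of $\max\{B/2, \gamma\}\epsilon$.

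Two points need care. First, the duality in the paper is stated one-sided. For $g$ that is $\lambda$-Lipschitz, both $g/\lambda$ and $-g/\lambda$ are $1$-Lipschitz, which handles the absolute value. Second, the Lipschitz definition must be in the $\ell_1$ norm, to match the $W_1$ metric in Definition~\ref{def:hoc}; the $\gamma$-Lipschitz hypothesis on $c_i$ should be read that way. Neither point is a real obstacle. The only substantive step is identifying the oracle's true cost on the level set as an expectation over $\mixbayes$, which is exactly what Bayes-dependence provides.
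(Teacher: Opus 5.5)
Your proof is correct and follows the paper's argument essentially step for step. Both split by action: the $A$ case contributes $0$, the $P$ case is handled by the $B/2$-Lipschitzness from Lemma~\ref{lem:lip-tl-ir}, and each $R_i$ case by the $\gamma$-Lipschitzness of $c_i(L,\cdot)$, with both then bounded via Kantorovich--Rubinstein duality by $W_1(\mixbayes,\mix)\le\epsilon$. Your two care points (using $\pm g$ to get the absolute value from the one-sided dual, and reading Lipschitzness in $\ell_1$, which matches the paper's Appendix A definition) are details the paper leaves implicit, as is writing the oracle's true cost on the level set as $\ex_{\distr^*\sim\mixbayes}[c_i(L,\distr^*)]+\alpha_i$ via Bayes-dependence.
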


The proof follows an almost identical argument to Lemma~\ref{lem:routing_loss_simulation}, which we present below. With this lemma in hand, we can apply it to get a generalized version of our main result, Theorem~\ref{thm:hoc_implies_routing}. We present only the statement without proof, as the proof is identical to the proof of Theorem~\ref{thm:hoc_implies_routing}, plugging in the updated bound  of $\max\{B/2, \gamma\}\epsilon$ on the gap between true and simulated routing costs. 

\begin{theorem}[Prediction-Optimal Routing with Bayes-Dependent Oracles (Generalization of Theorem~\ref{thm:hoc_implies_routing})]
    Suppose we have an $\epsilon$-higher-order calibrated predictor $\mixpred$ whose level sets match those of $\pred$ (i.e., $\mixpred$ can be viewed as a function of $\pred$). Then we can construct a flexible router $r$ based on $\mixpred$ with the following near-optimality property. For any routing configuration $T = (L, \alpha_{1:k}, \beta)$ where $L$ is a $B$-bounded proper loss, $\alpha_{1:k}, \beta \geq 0$, and each routing decision $R_i$ routes to a Bayes-dependent oracle whose loss can be computed using the $\gamma$-Lipschitz function $c_i(L, \cdot)$, then
    we have \[\ex_{x \sim \calD_{\calX}}[\mathsf{Cost}(x, r(x, T); T) - \mathsf{Cost}(x, r_{\pred}(x, T); T)] \leq \max\{B\epsilon, 2\gamma\epsilon\}.\]
\end{theorem}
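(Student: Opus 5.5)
The generalized theorem should follow the proof of Theorem~\ref{thm:hoc_implies_routing} essentially verbatim once Lemma~\ref{lem:generalized-simulated-cost} is in hand, so I would prove that lemma first.

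Fix $\mix$ in the range of $\mixpred$. Let $\mixbayes$ be the law of $\pbayes(x)$ for $x\sim\calD_{\calX}$ conditioned on $\mixpred(x)=\mix$. Let $\distr$ be the constant value of $\pred$ on this level set, which exists by the level-set assumption. For each action I would write the true conditional cost and the simulated cost as expectations of one and the same function $g_a$, once under $\mixbayes$ and once under $\mix$:
\begin{itemize}
\item for $P$, take $g_P(\distr^*)=L\divx{\distr^*}{\distr}$;
\item for $R_i$, take $g_{R_i}(\distr^*)=c_i(L,\distr^*)+\alpha_i$, using Bayes-dependence to write $L(\pbayes(x),g_{O_i}(x))=c_i(L,\pbayes(x))$ pointwise;
\item for $A$, $g_A$ is the constant $\beta$, so the gap is zero.
\end{itemize}
By Lemma~\ref{lem:lip-tl-ir}, $g_P$ is $B/2$-Lipschitz. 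Each $g_{R_i}$ is $\gamma$-Lipschitz by hypothesis, since the additive constant $\alpha_i$ does not matter. The absolute difference of expectations of a $K$-Lipschitz function under two measures is at most $K\,W_1(\mixbayes,\mix)$, by Kantorovich--Rubinstein duality applied to $g/K$ and $-g/K$. Higher-order calibration gives $W_1(\mixbayes,\mix)\le\epsilon$. Hence every action's gap is at most $\max\{B/2,\gamma\}\epsilon$.

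For the theorem, I would define $r(x,T):=\arg\min_{a\in\{P,A,R_1,\dots,R_k\}}\hat{\mathsf{Cost}}_{\mixpred}(x,a;T)$. This router is constant on level sets of $\mixpred$, equivalently of $\pred$. The prediction-only optimal router $r_\pred$ is also constant on these level sets, so on each level set both routers play a single action. I would then chain three inequalities:
\begin{itemize}
\item true cost of $r$'s action $\le$ simulated cost of that action $+\max\{B/2,\gamma\}\epsilon$, by the lemma;
\item simulated cost of $r$'s action $\le$ simulated cost of $r_\pred$'s action, by the argmin definition of $r$;
\item simulated cost of $r_\pred$'s action $\le$ true cost of $r_\pred$'s action $+\max\{B/2,\gamma\}\epsilon$, by the lemma again.
\end{itemize}
This gives a per-level-set regret of $2\max\{B/2,\gamma\}\epsilon=\max\{B\epsilon,2\gamma\epsilon\}$. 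Averaging over level sets with respect to $\calD_{\calX}$ then yields the claim.

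The only delicate step is the Bayes-dependence rewrite. Conditioned on the level set, the oracle's expected loss must be a function of $\pbayes(x)$ alone. Otherwise the true conditional routing cost would not be an expectation under $\mixbayes$, and the Wasserstein bound could not be applied. The definition of a Bayes-dependent oracle supplies exactly this property. Beyond that, one must keep the Lipschitz constants consistent, namely $B/2$ for the predict action versus $\gamma$ for the oracle actions. I expect no further obstacles.
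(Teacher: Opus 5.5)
Your proposal is correct and follows the paper's own argument. The paper proves Lemma~\ref{lem:generalized-simulated-cost} by Wasserstein duality, with Lipschitz constants $B/2$ for predict and $\gamma$ for each oracle action, and then reuses the three-inequality chain from the proof of Theorem~\ref{thm:hoc_implies_routing} on each level set to obtain $2\max\{B/2,\gamma\}\epsilon=\max\{B\epsilon,2\gamma\epsilon\}$.
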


We now provide the proof of Lemma~\ref{lem:generalized-simulated-cost}.
\begin{proof}[Proof of Lemma~\ref{lem:generalized-simulated-cost}]
    It suffices to prove the statement for an arbitrary $T = (L, \alpha_{1:k}, \beta)$ and predicted mixture $\mix$, where $L$ is a $B$-bounded proper loss, and $\alpha_{1:k}, \beta \geq 0$. Let $\mixbayes$ denote the distribution over $\pbayes(x)$ induced by drawing $x \sim \calD_{\calX}$ conditioned on $\mixpred(x) = \mix$. Note that by our assumption that $\mixpred$ has the same level sets as $\pred$, $\pred(x)$ is constant for all points with $\mixpred(x) = \mix$. We denote this constant value $\pred(x) = \distr$.

    We bound the maximum difference in real vs simulated routing costs by taking a maximum over all possible routing decisions:
    \begin{align*}
        &\max_{a \in \{P, R_{1:k}, A\}}\left|\ex_{x \sim \calD_{\calX}}[\mathsf{Cost}(x, a; T)|\mixpred(x) = \mix] - \hat{\mathsf{Cost}}_{\mixpred}(x, a; T)\right|\\
        &= \max \{\left|\ex_{\distr^* \sim \mixbayes}[L\divx{\distr^*}{\distr}] - \ex_{\hat\distr \sim \mix}[L\divx{\hat\distr}{\distr}]\right|, 0 , \max_{i \in [k]}\{\left|\ex_{\distr^* \sim \mixbayes}[c_i(L, \distr^*)] - \ex_{\hat{\distr} \sim \mix}[c_i(L, \hat{\distr})]\right|\}\}
    \end{align*}

    We use the following dual definition of the 1-Wasserstein distance. Let $\mathcal{G}_{Lip}$ be the class of all $1$-Lipschitz functions $g: \DY \rightarrow \R$. We can equivalently write the Wasserstein distance between $\mix_1 \in \DDY$ and $\mix_2 \in \DDY$ as 
    \[W_1(\mix_1, \mix_2) = \max_{g \in \mathcal{G}_{Lip}}\ex_{\distr_1 \sim \mix_1}[g(\distr_1)] - \ex_{\distr_2 \sim \mix_2}[g(\distr_2)].\]

    We note that by Lemma~\ref{lem:lip-tl-ir}, the function $g_{TL}(\distr^*) = L\divx{\distr^*}{\distr}$ satisfies $B/2$-Lipschitzness, and by assumption each $c_i(L, \cdot)$ satisfies $\gamma$-Lipschitz-ness, and thus we can further bound our maximum by a multiplicative factor of the maximum over all 1-Lipschitz functions:
    \begin{align*}
        \max\{\gamma, \frac{B}{2}\}\max_{g \in \mathcal{G}_{Lip}}\ex_{\distr^* \sim \mixbayes}[g(\distr^*)] - \ex_{\hat\distr \sim \mix}[g(\hat\distr)]&= \max\{\frac{B}{2}, \gamma\}W_1(\mixbayes, \mix) \tag{Wasserstein duality}\\
        &\leq \max\{B/2, \gamma\}\epsilon \tag{Higher-order calibration guarantee}
    \end{align*}

    Thus, we conclude that the difference between the true and estimated routing loss values is at most $\max\{B/2, \gamma\}\epsilon$.
\end{proof}

\section{Characterizing Useful Partitions}\label{app:choosing-good-partitions}

As discussed in Section~\ref{sec:practical-discussion}, the choice of partition $\calP$ is critical to the success of our method. Even among all partitions that allow for sample-efficient estimation of the loss decomposition, some are better than others. The following lemma gives a property of a partition that characterizes how the best partition-dependent decisions compare to the pointwise-optimal decisions: partitions that group points with similar amounts of reducible loss have routing performance very close to the pointwise-optimal cost. 

For simplicity, we focus on the setting where $\beta = \infty$, i.e. the routing policy need only choose between prediction and routing to the oracle. However, using similar techniques an expression can be derived for general $\beta$, and is a more complicated expression capturing the variance in both the reducible and irreducible loss within each partition cell. 

\begin{lemma}
    Given a partition $\calP$ of the space $\calX$ and a routing configuration $T = (L, \alpha, \beta)$ for a proper loss $L$ and $\alpha \geq 0$, $\beta = \infty$ (no abstention), consider a bin within the partition, $P \in \calP$. Denote $\RL$ as the random variable corresponding to the reducible loss of points $x \sim \calD_{\calX}$, conditioned on falling within $P$, with mean $\overline{\RL}$. 
    
    Then, the excess routing cost incurred by picking the best single routing decision for all of $P$ compared to the pointwise-optimal router has the following upper bound:
    \[\min_{a \in \{P, R, A\}}\ex_{x \sim \calD_{\calX}}[\mathsf{Cost}(x, a; T) | x \in P] - \ex_{x \sim \calD_{\calX}}[\mathsf{Cost}(x, r^*(x, T); T)| x \in P] \leq \frac{1}{2}\ex[\left|\RL - \overline{\RL}\right|].\]
\end{lemma}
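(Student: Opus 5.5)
With $\beta=\infty$ abstention is never chosen, either by the pointwise-optimal router or by the best bin-constant action, so only $\{P,R\}$ matter. I would write every cost relative to the irreducible loss: $\mathsf{Cost}(x,P;T)=\IL(x)+\RL(x)$ and $\mathsf{Cost}(x,R;T)=\IL(x)+\alpha$. The term $\IL(x)$ is common to both actions, so it cancels in the difference we want to bound. The proposition in \cref{app:optimal-routing-decisions} gives
\[
\mathsf{Cost}(x,r^*(x,T);T)=\IL(x)+\min\{\RL(x),\alpha\}.
\]
Conditioning on $x\in P$, the left-hand side of the lemma therefore equals
\[
\min\{\overline{\RL},\alpha\}-\ex[\min\{\RL,\alpha\}].
\]

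The key step is the identity $\min\{u,\alpha\}=\frac{u+\alpha}{2}-\frac{|u-\alpha|}{2}$, which turns the regret into a Jensen gap for the absolute value. Applying it to both terms, the linear parts $\frac{\overline{\RL}+\alpha}{2}$ cancel in expectation because $\ex[\RL]=\overline{\RL}$. What remains is
\[
\frac12\left(\ex|\RL-\alpha|-|\overline{\RL}-\alpha|\right).
\]
By the triangle inequality, $|\RL-\alpha|\le|\RL-\overline{\RL}|+|\overline{\RL}-\alpha|$ holds pointwise. Taking expectations bounds the remaining expression by $\frac12\ex|\RL-\overline{\RL}|$, which is the claim.

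The main obstacle is the bookkeeping at the start: showing the abstain action drops out when $\beta=\infty$, and checking that $\IL$ cancels exactly. The cancellation works because the bin-optimal cost is $\ex[\IL]+\min\{\overline{\RL},\alpha\}$: the minimum over the two actions can be pulled inside once the common $\ex[\IL]$ is separated. I would also note that properness ensures $\RL\ge0$, although the argument does not actually need it. Because the bound is an expectation, no integrability issues arise, since $L$ is bounded.
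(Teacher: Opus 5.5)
Your proof is correct. The first half matches the paper's: both cancel the common irreducible term and reduce the left-hand side to $\min\{\overline{\RL},\alpha\}-\ex[\min\{\RL,\alpha\}]$.

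The second half takes a different route.
\begin{itemize}
\item The paper treats this quantity as a function of $\alpha$. It argues the quantity is non-decreasing for $\alpha\le\overline{\RL}$ and non-increasing for $\alpha\ge\overline{\RL}$, so the worst case is $\alpha^*=\overline{\RL}$. There it evaluates to $\ex[(\overline{\RL}-\RL)^+]=\frac12\ex|\RL-\overline{\RL}|$, using $\ex[\RL-\overline{\RL}]=0$.
\item You instead use $\min\{u,\alpha\}=\frac{u+\alpha}{2}-\frac{|u-\alpha|}{2}$. This gives the exact identity that the regret equals $\frac12\bigl(\ex|\RL-\alpha|-|\overline{\RL}-\alpha|\bigr)$ for every $\alpha$, and you finish with the triangle inequality.
\end{itemize}

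Your route needs no monotonicity analysis in $\alpha$, and the exact formula carries extra information.
\begin{itemize}
\item Equality holds at $\alpha=\overline{\RL}$, which gives the paper's tightness remark at once.
\item The regret vanishes whenever $\RL-\alpha$ has almost surely constant sign, that is, whenever a single action is pointwise optimal throughout the bin.
\end{itemize}
The paper's route, in exchange, makes the adversarial choice of $\alpha$ explicit and states the worst case in the one-sided form $\ex[(\overline{\RL}-\RL)^+]$.
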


We note that this bound is tight in that there exists an $\alpha$ that results in excess cost matching the upper bound.
\begin{proof}
    Similar to $\RL$, let $\IL$ denote the random variable corresponding to the irreducible loss for a point $x \sim \calD_{\calX}$ conditioned on $x \in P$, with mean $\overline{\IL}$. 

    Substituting these values into our definition of $\mathsf{Cost}$, and noting that because $\beta = \infty$, it is never optimal to abstain, the excess cost expression simplifies to 
    \begin{align*}
        \min \{\overline{\IL} + \overline{\RL}, \overline{\IL} + \alpha\} - \ex[\min\{\IL + \RL, \IL + \alpha\}] &= \min \{\overline{\IL} + \overline{\RL}, \overline{\IL} + \alpha\} - \ex[\IL + \min\{\RL,\alpha\}] \\
        &= \min\{\overline{\RL}, \alpha\} - \ex[\min\{\RL, \alpha\}]].
    \end{align*}

    When $\alpha \leq \overline{\RL}$, this expression is increasing with $\alpha$, and when $\alpha \geq \overline{\RL}$, the expression is non-increasing as $\alpha$ increases. Thus, we conclude that $\alpha^* = \overline{\RL}$ maximizes the expression. We substitute in $\alpha^*$ to get an upper bound on the excess cost, which simplifies the excess cost to 
    \begin{align*}
        \overline{\RL} - \ex[\min\{\RL, \overline{\RL}\}]&= \ex[(\overline{\RL} - \RL)^+]\\
        &= \frac{1}{2}\ex[\left|\overline{\RL} - \RL\right|].
    \end{align*}
\end{proof}

\subsection{Bucketing Strategies}
With this intuition in mind, we recommend the following strategies for partitioning:

\subsubsection{Top-Class Binning} Calibrating on the full vector of probability outputs is intractable for multiclass problems due to the exponential growth of the output space. To maintain sample-efficiency, we partition the space based on the \emph{top-class prediction} and its associated confidence (e.g., ``Class 5, Confidence 0.9-1.0''). This ensures the partition size scales linearly with the number of classes, making calibration more sample-efficient.

\subsubsection{Centroid Calibration} In the ``Optional Re-Calibration Step,'' we post-process the weak predictor to output the centroid of the empirical distribution of true labels in its assigned bin. This alignment ensures the predictor is constant over each level set of the higher-order predictor, satisfying the assumptions of Theorem~\ref{thm:hoc_implies_routing}. In practice, we found this not only stabilizes routing but often reduces the weak model’s original prediction loss, as it is essentially a form of (first-order) post-hoc calibration.

\begin{remark}
    We note that if the ``weak'' model is already highly performant, collapsing its output to the centroids of a coarse partition may discard useful signal. In such cases, this step should be skipped.
\end{remark}

\subsubsection{Feature-based Binning} Finally, relying solely on the weak model's confidence may limit a partition's usefulness. In practice, it may be useful to additionally define bins using auxiliary features that can be expected to correlate with the amount of irreducible uncertainty, such as a heuristic for the blurriness of an image in an image classification task. We provide a synthetic example of how feature-based binning can improve over binning by prediction value in \cref{fig:piecewise_recal_curve}.

\section{Additional Experiment Discussion}

\subsection{Dataset Details}\label{sec:dataset-details}
We detail each dataset and associated weak predictor architecture below.
We do not perform any hparam optimization since we are not trying to obtain the best performing model in any setting, we mostly use defaults from other implementations.

\begin{itemize}
\item \textbf{Synthetic Data}: A set of synthetic data functions with baseline aleatoric uncertainty inspired by \citet{Johnson24}. We give further explanation of how we generate this data in \cref{sec:synthetic_data_gen}. For the weak predictor, we train a simple ResNet with three residual blocks and two MLP layers in each block, following the architecture used by \citet{Johnson24}. The model achieves nearly perfect fit to the ground-truth function, since we can generate infinite synthetic examples for training.
    
\item \textbf{CIFAR-10H}: A re-annotation of the CIFAR-10 test set where each image has at least 47 human labels \citep{peterson2019human}. As the weak predictor, we train a standard ResNet34 \citep{he2016deep} on the original CIFAR-10 training split.
Our model achieves 87\% accuracy on the original CIFAR-10 test split, consistent with being a weak, imperfect predictor.
We split the 10K example CIFAR-10H test-set into 5K for post-hoc higher order calibration, and 5K hold-out examples for testing the router. 
Dataset License: CC BY-NC-SA 4.0
Further hparams:
\begin{itemize}
    \item \textbf{Training epochs:} $50$.
    \item \textbf{Batch size:} $512$.
    \item \textbf{Optimizer:} AdamW.
    \item \textbf{Maximum learning rate:} $3.799 \times 10^{-3}$.
    \item \textbf{Weight decay:} $3.656 \times 10^{-1}$.
    \item \textbf{AdamW defaults:} $\beta_1=0.9$, $\beta_2=0.999$, $\epsilon=10^{-8}$.
    \item \textbf{Learning-rate schedule:} OneCycleLR stepped once per minibatch, with $50$ epochs and $\texttt{steps\_per\_epoch}=|\mathcal{D}_{\mathrm{train}}|/512$.
    \item \textbf{Loss:} cross-entropy.
    \item \textbf{Gradient clipping:} value clipping at $0.1$.
\end{itemize}

\item \textbf{SNLI}: Natural language inference task, classify pairs of sentences into ``contradiction'', ``neutral'', or ``entailment''. Five human labels per example \citep{bowman2015large}. We finetune a 100M parameter Bert model \citep{bert} (License: apache-2.0) on the classification task given pairs of sentences, using the majority label. Training is done on the original SNLI train set. 
The resulting test accuracy of the weak predictor is 85\%.
We run routing experiments by splitting the SNLI 10K test set, with 5K examples used for post-hoc higher order calibration, and 5K hold-out examples used for testing the router.
Dataset License: CC BY-SA 4.0. Further hparams:
\begin{itemize}
    \item \textbf{SNLI model:} \texttt{bert-base-uncased} sequence classifier with $3$ output labels.
    \item \textbf{Training epochs:} $3$.
    \item \textbf{Batch size:} $32$.
    \item \textbf{Optimizer:} AdamW.
    \item \textbf{Learning rate:} $2 \times 10^{-5}$.
    \item \textbf{AdamW defaults:} $\beta_1=0.9$, $\beta_2=0.999$, $\epsilon=10^{-8}$, weight decay $0.01$.
    \item \textbf{Learning-rate schedule:} linear decay with warmup.
    \item \textbf{Warmup:} $10\%$ of total training steps.
    \item \textbf{Loss:} cross-entropy via the HuggingFace sequence-classification loss.
    \item \textbf{Gradient clipping:} norm clipping at $1.0$.
    \item \textbf{Maximum sequence length:} $128$ tokens.
\end{itemize}

\item \textbf{ChaosNLI}: Re-labeling of SNLI and MNLI \citep{N18-1101} with 100 human labels per sentence pair \citep{ynie2020chaosnli}, increasing the amount of aleatoric uncertainty. 
As the weak predictor, we utilize the original SNLI/MNLI predictions given by the authors \citep{ynie2020chaosnli}. These were generated by finetuning a RoBERTa-large model on a variety of NLI tasks \citep{nie-etal-2020-adversarial}.
We use a 3113 subset of the ChaosNLI dataset belonging to SNLI and MNLI, and further use half the data for post-hoc model higher-order calibration, and the other half for routing experiments.
Dataset License: CC BY-NC 4.0

\end{itemize}

All predictions for routing were generated in <1 hour on a single A100 GPU.
Running routing algorithms themselves is cheap and CPU-bound.

\subsection{Synthetic Data Generation Details}\label{sec:synthetic_data_gen}

We employ a synthetic binary classification setting to precisely control the learning environment, particularly the irreducible uncertainty. The inputs are real-valued scalars drawn from a standard normal distribution, $x \sim \mathcal{N}(0, 1)$. We define the target distribution via an explicit Bayes-optimal predictor $\pbayes: \mathbb{R} \rightarrow [0, 1]$, which maps each input $x$ to the true conditional probability $P(y = 1 \mid x)$. By explicitly designing $\pbayes$, we can directly manipulate both the magnitude and the variance of the irreducible uncertainty across the input space.

For a given $\pbayes$, we train the weak predictor on a dataset of 10,000 samples $(x, y)$, where $y \sim \mathsf{Ber}(\pbayes(x))$. To perform higher-order calibration, we generate a $k$-snapshot calibration set containing 5,000 instances. For each input $x$, we sample $k=100$ independent and identically distributed labels $y_i \sim \mathsf{Ber}(\pbayes(x))$. Finally, we evaluate all methods on a held-out test set of 500,000 samples.

\cref{fig:synthetic_data_fns} visualizes the three target distributions ($\pbayes$) we consider, the predictions of the trained weak models, and the post-hoc calibrated predictions used in our experiments. The precise functions used to construct each of these examples is detailed below:

\begin{itemize}
    \item \cref{fig:johnson_fn}: 
    $$\pbayes(x) = \frac{0.98 u(x) + 1}{2}$$where the intermediate variables $u(x)$, $v(x)$, and $w(x)$ are defined as:$$u(x) = 0.6 \cos(v(x)) + 0.4 \cos(4.2 x)$$$$v(x) = \text{sgn}(x) \left(120|x| - 112w(x) - 0.0635\right)$$$$w(x) = 0.2 \ln\left(1 + \exp\left(\frac{|x| - 1}{0.2}\right)\right)$$
    
    \item \cref{fig:three_steps_fn}: 
    $$ \pbayes(x) = \begin{cases} 
    0 & \text{if } x \le -1 \\ 
    \frac{\sin(100x)}{2} + 0.5 & \text{if } -1 < x < 1 \\ 
    1 & \text{if } x \ge 1 
    \end{cases} $$
    \item \cref{fig:piecewise_fn}: 
    $$ \pbayes(x) = \begin{cases}
    0.5 & \text{if } x \le -1 \\
    \frac{\sin(100x)}{4} + 0.5 & \text{if } -1 < x \le -0.5 \\
    0.25 & \text{if } -0.5 < x \le 0 \\
    \frac{\sin(100x)}{4} + 0.5 & \text{if } 0 < x \le 0.5 \\
    \frac{\sin(100x)}{4} + 0.25 & \text{if } x > 0.5
    \end{cases} $$
\end{itemize}

\begin{figure}[htbp]
     \centering
     \begin{subfigure}[b]{0.3\textwidth}
         \centering
         \includegraphics[width=\textwidth]{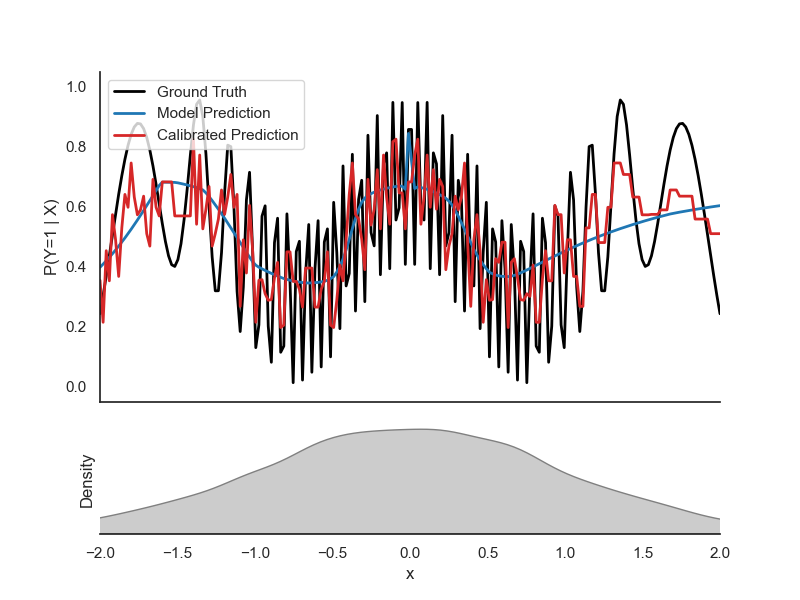}
         \caption{Sinusoidal ground truth with variable irreducible uncertainty, matching the test function used in the synthetic experiments of \citeauthor{Johnson24}.}
         \label{fig:johnson_fn}
     \end{subfigure}
     \hfill
     \begin{subfigure}[b]{0.3\textwidth}
         \centering
         \includegraphics[width=\textwidth]{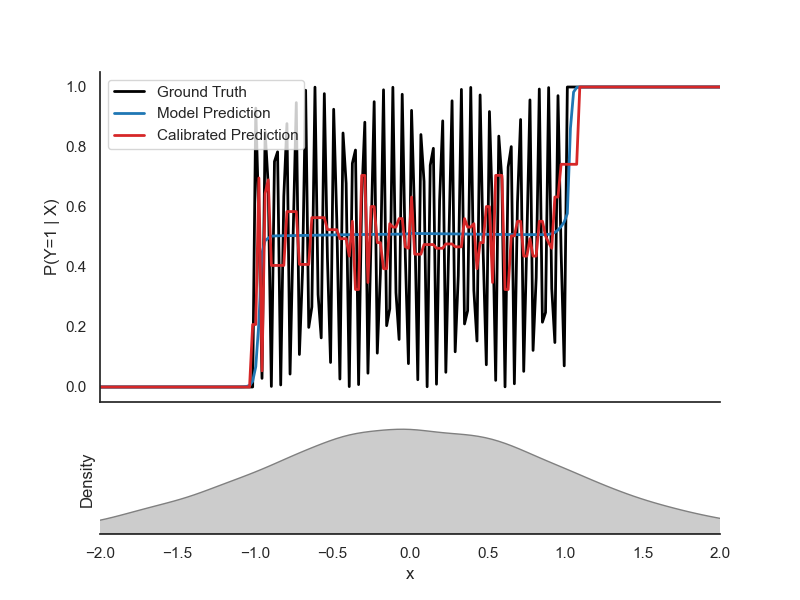}
         \caption{Step function distribution with varying amounts of irreducible uncertainty. \vspace{24pt}}
         \label{fig:three_steps_fn}
     \end{subfigure}
     \hfill
     \begin{subfigure}[b]{0.3\textwidth}
         \centering
         \includegraphics[width=\textwidth]{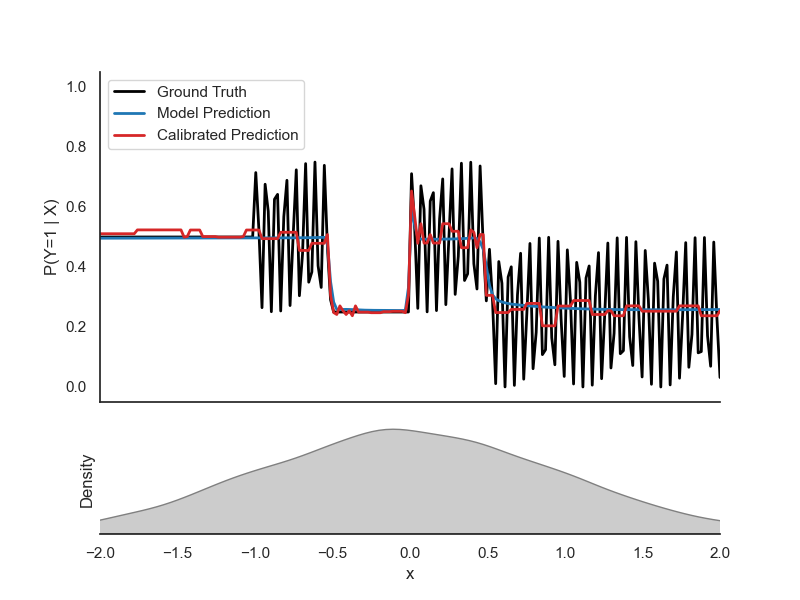}
         \caption{Mixed target distribution where regions of low and high variance in the amount of irreducible uncertainty share the same mean.}
         \label{fig:piecewise_fn}
     \end{subfigure}
     
     \caption{Examples of synthetic binary data settings used in experiments, as described in \cref{sec:synthetic_data_gen}. The marginal distribution over $x$ is the standard normal distribution, and the black curve represents the true conditional expectation of $y$ for each $x$, i.e. $\pbayes(x)$. The blue curve captures the predictions of a model trained on the data, and the red curve captures the predictions of the model after post-hoc calibration on the same buckets used by our routing framework.}
     \label{fig:synthetic_data_fns}
\end{figure}

\subsection{Benefits of Decomposition}

\subsubsection{Further Discussion of Decomposition}\label{sec:decomp-discussion}

We highlight two key factors that dictate the benefits of our method, and more broadly any method that can separate irreducible and reducible loss:

\paragraph{Reducible vs. Total Loss Correlation} In environments where reducible and total loss are highly correlated, a ranking based on total uncertainty serves as a near-perfect proxy for a ranking by reducible loss. In such cases, simpler confidence-based metrics are sufficient for near-optimal routing. We note that this correlation may be an inherent property of the dataset, such as if all points have a similar amount of irreducible loss, or can be a byproduct of the training dynamics used to obtain the weak predictor. 

We provide an example of the latter case in \cref{fig:three_steps_recal_curve}. While the target function (\cref{fig:three_steps_fn}) has significant variance in the amount of irreducible uncertainty across inputs, the highly varied portion of the data (where the model experiences the most irreducible loss) corresponds with the places where a weak model is likely to predict a value close to 0.5, the maximal amount of total uncertainty, causing routing based on total uncertainty to perform almost as well as routing based on the reducible loss. 

\begin{figure*}[ht]
    \centering   
    \includegraphics[width=0.75\linewidth]{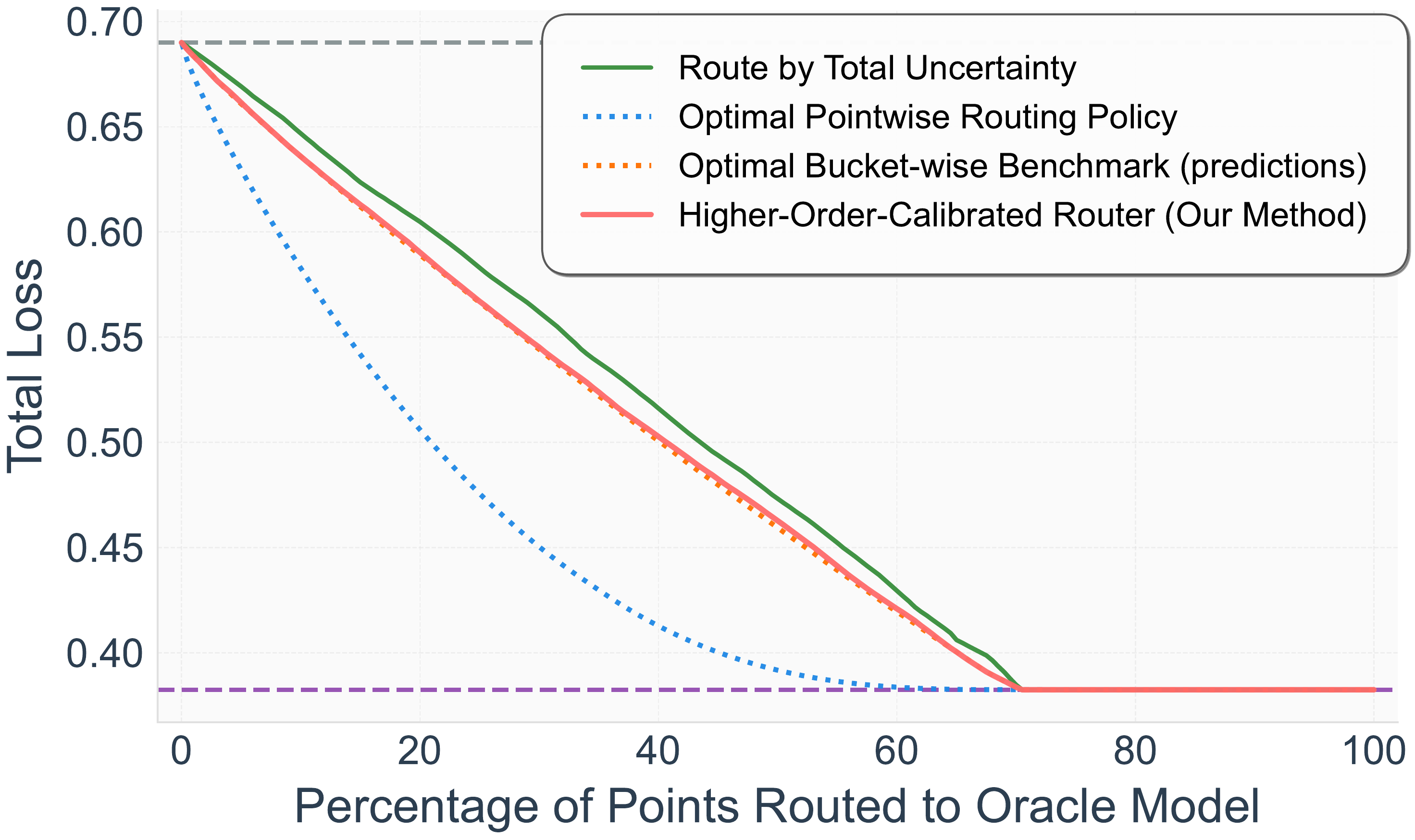}
    \caption{The above routing curve depicts the performance of our routing framework on the synthetic target function visualized in \cref{fig:three_steps_fn}. We highlight this as an example of a setting in which there is a large amount of variation in the irreducible uncertainty of the data, but training dynamics result in a model that doesn't significantly benefit from decomposition, as evidenced by the fact that the performance achieved by a total uncertainty-based approach is closely aligned with the routing curve of our higher-order method.}
    \label{fig:three_steps_recal_curve}
\end{figure*}

\paragraph{The Impact of Bucket Structure} As discussed in \cref{sec:practical-discussion}, for bucket-based approaches like ours, the choice of partition is critical. Even if reducible and total loss are uncorrelated in a pointwise sense, a poorly chosen bucketing scheme may average these values in a way that induces a high correlation per-bucket. To illustrate this, \cref{fig:piecewise_recal_curve} provides a synthetic example where introducing even a small number of feature-based buckets results in significant performance gains from decomposition compared to bucketing only by level sets.

\begin{figure*}[ht]
    \centering   
    \includegraphics[width=0.75\linewidth]{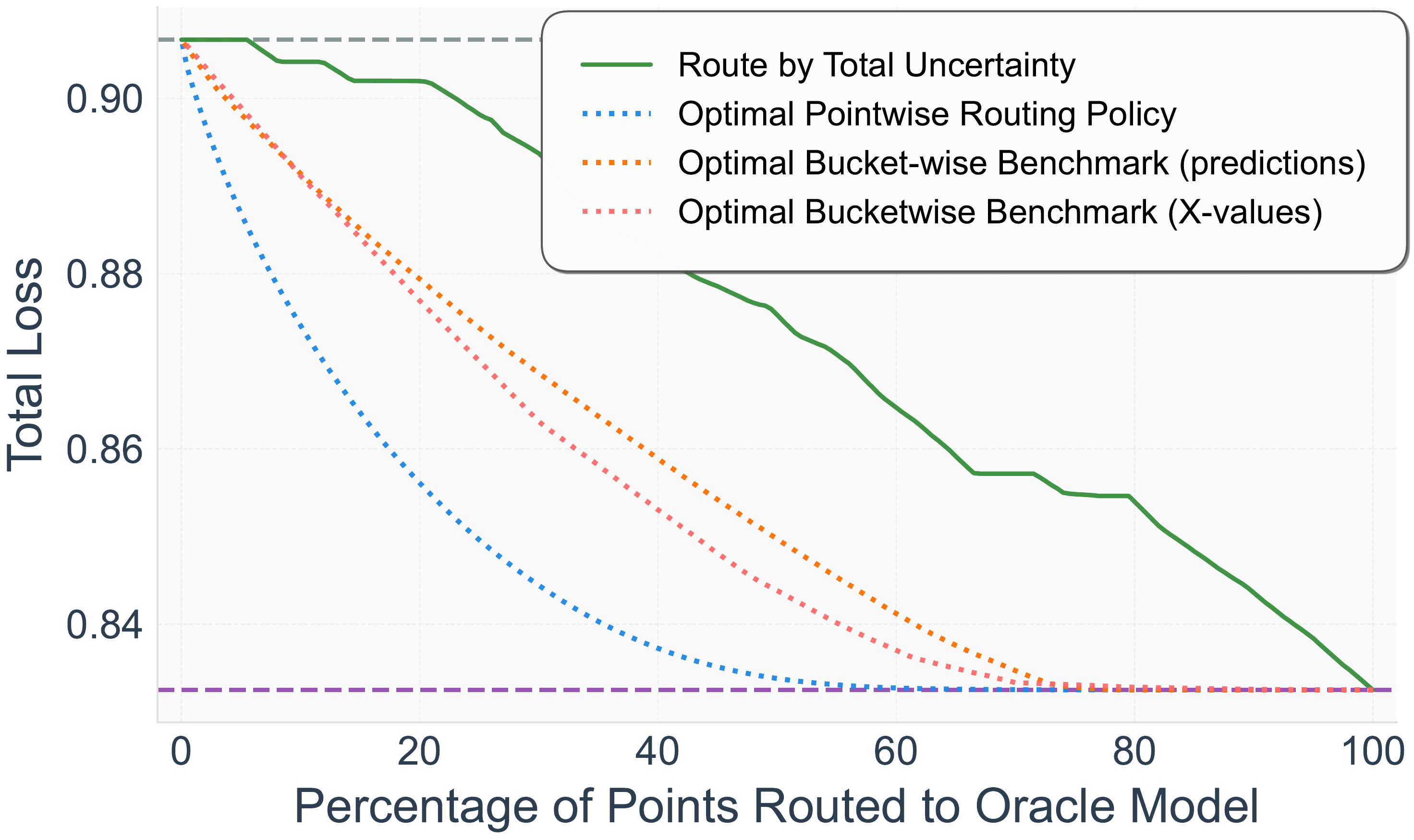}
    \caption{The above routing curve compares the performance of two bucketing approaches on synthetic data with target function depicted in \cref{fig:piecewise_fn}. The orange and pink lines both depict the optimal performance of a router that must be constant over a particular partition of the input space of buckets. The orange line is the performance when 30 buckets of equal size are created based on the level sets (predictions) of the weak model, while the pink line results from using just 10 equal-sized buckets that split on the input value $x$. The improved performance of the pink line illustrates how the particular choice of bucketing can be critical the performance of our routing framework.}
    \label{fig:piecewise_recal_curve}
\end{figure*}

\section{Routing Curve Plots}\label{sec:routing-curve-plots}
This section provides the full routing curve results for all evaluated datasets, comparing the calibrated and uncalibrated models.
\subsection{Synthetic Data}\label{sec:routing-plots-synthetic}
\compareplots
    {johnson_recal.pdf} %
    {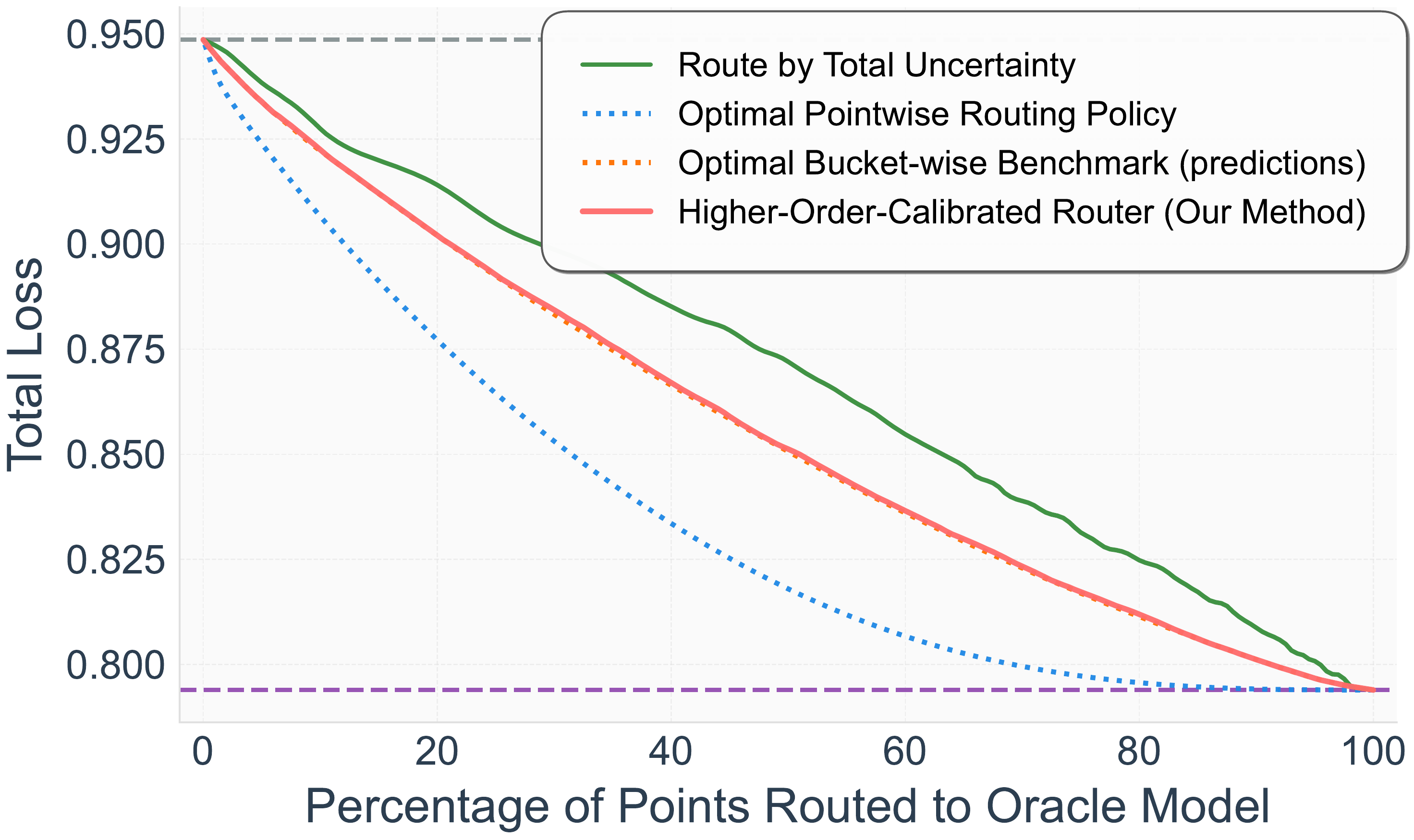} %
    {fig:routing_johnson_recal} %
    {fig:routing_johnson_no_pp} %
    {Routing curves for synthetic data as described in \cref{sec:synthetic_data_gen} and target function as depicted in \cref{fig:johnson_fn}. The right (b) employs an out-of-the-box weak predictor for the dataset, whereas the left (a) uses the same predictor but with an additional post-hoc calibration step on the same buckets used by the higher-order-calibrated router.} %
    {fig:routing_johnson_full}

\subsection{CIFAR-10H}

\compareplots
    {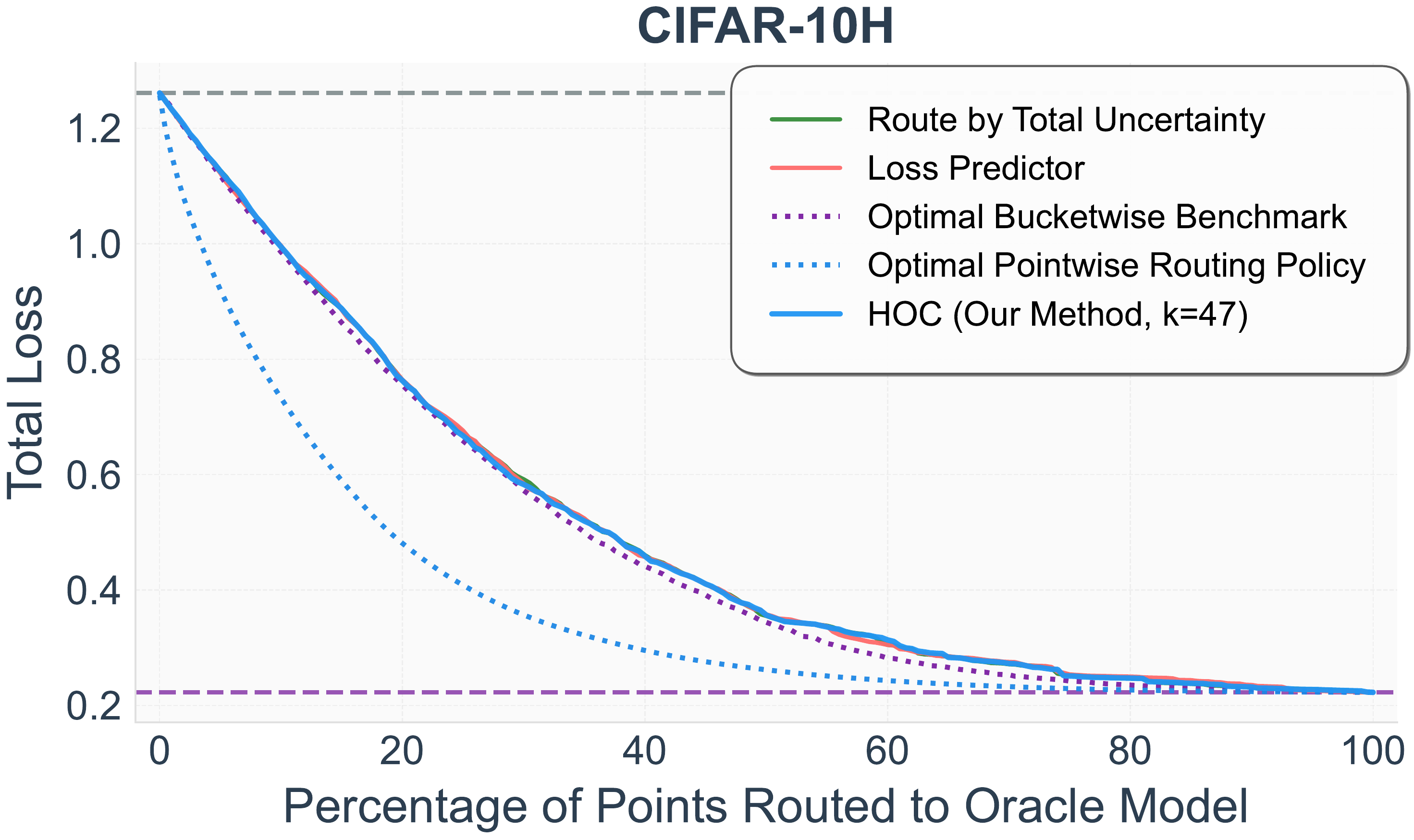} %
    {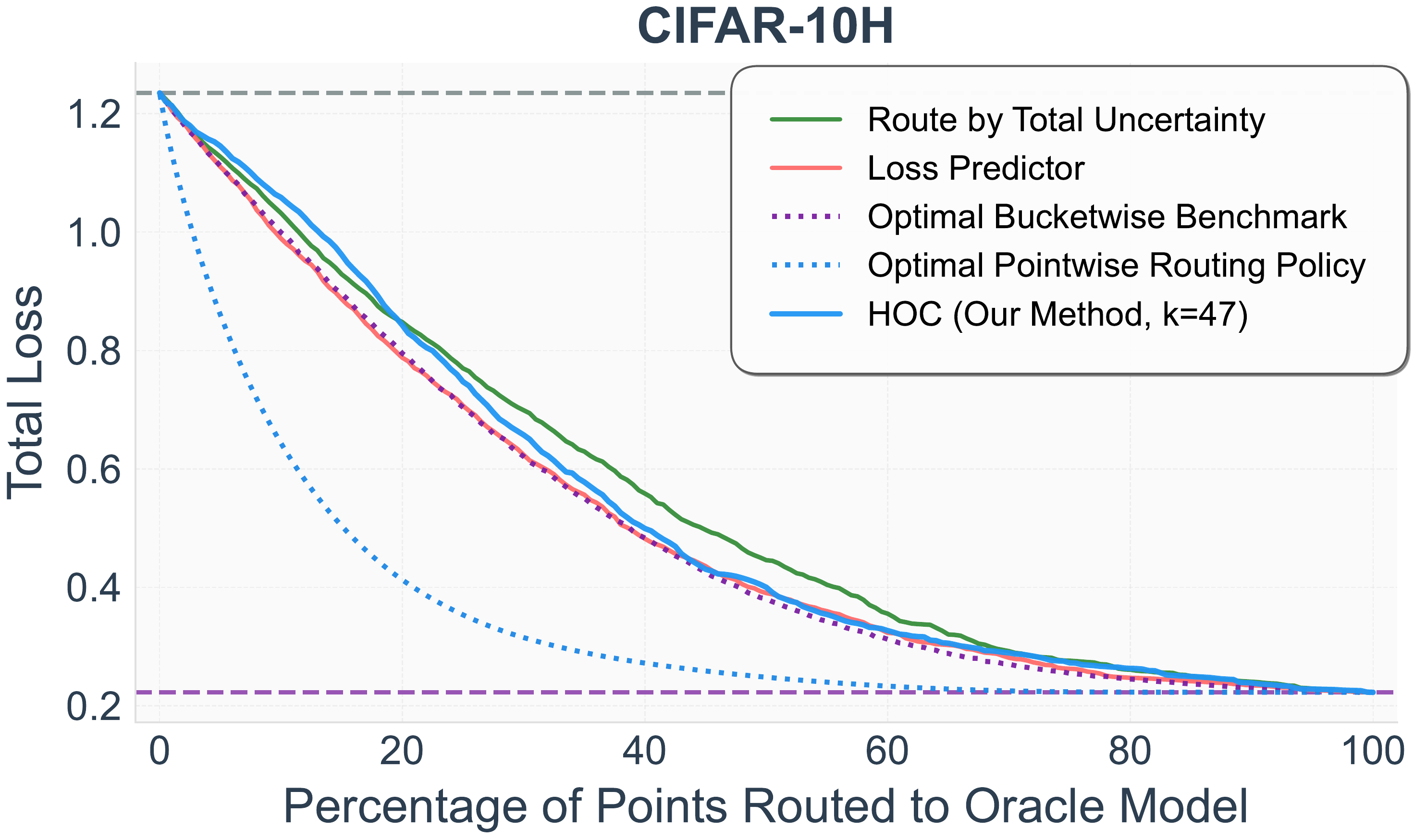} %
    {fig:routing_cifar10_recal} %
    {fig:routing_cifar10_no_pp} %
    {Routing curves for the CIFAR-10H dataset. The right (b) employs an out-of-the-box weak predictor for the dataset, whereas the left (a) uses the same predictor but with an additional post-hoc calibration step on the same buckets used by the higher-order-calibrated router.} %
    {fig:routing_cifar10_full}
    
\subsection{SNLI}

\compareplots
    {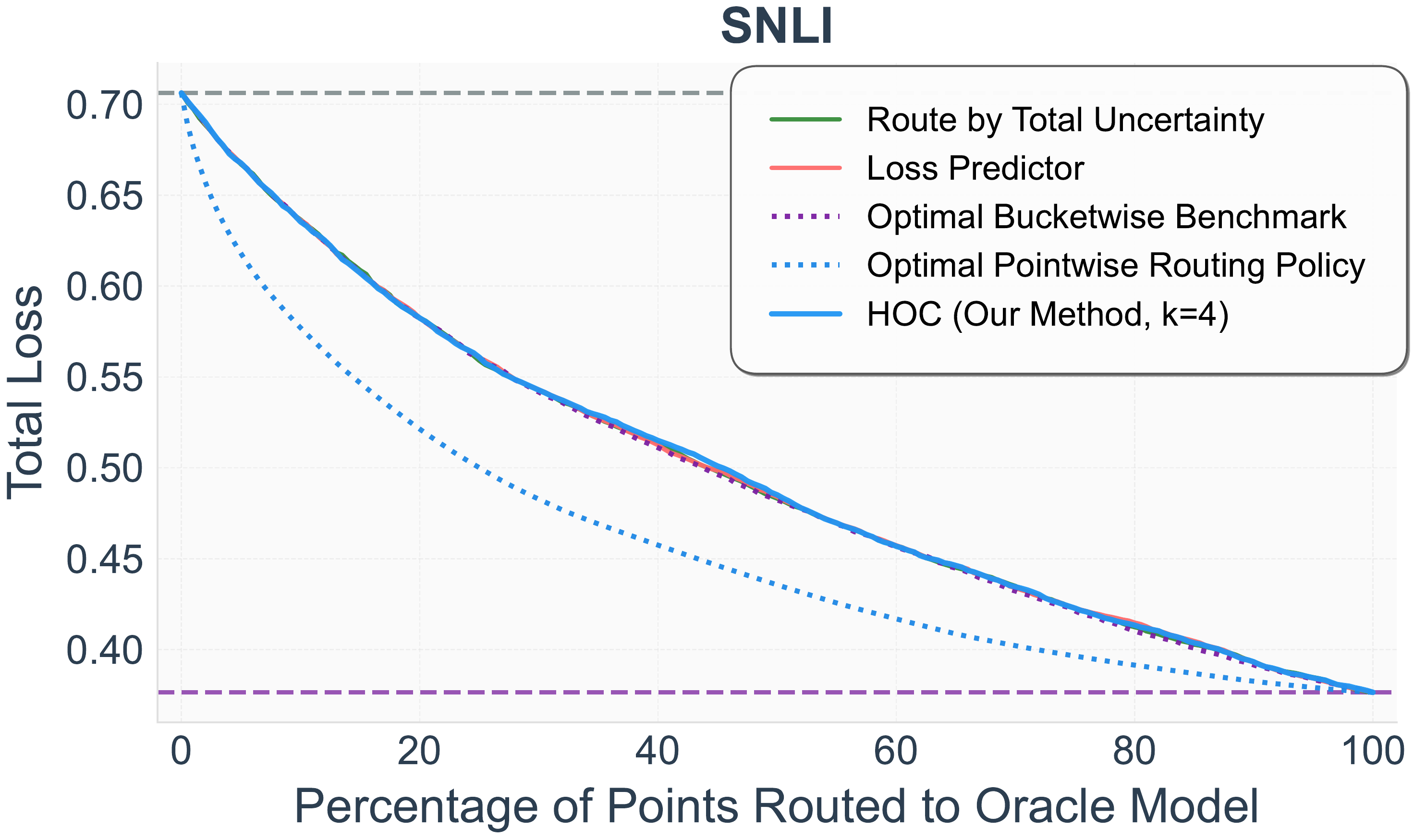} %
    {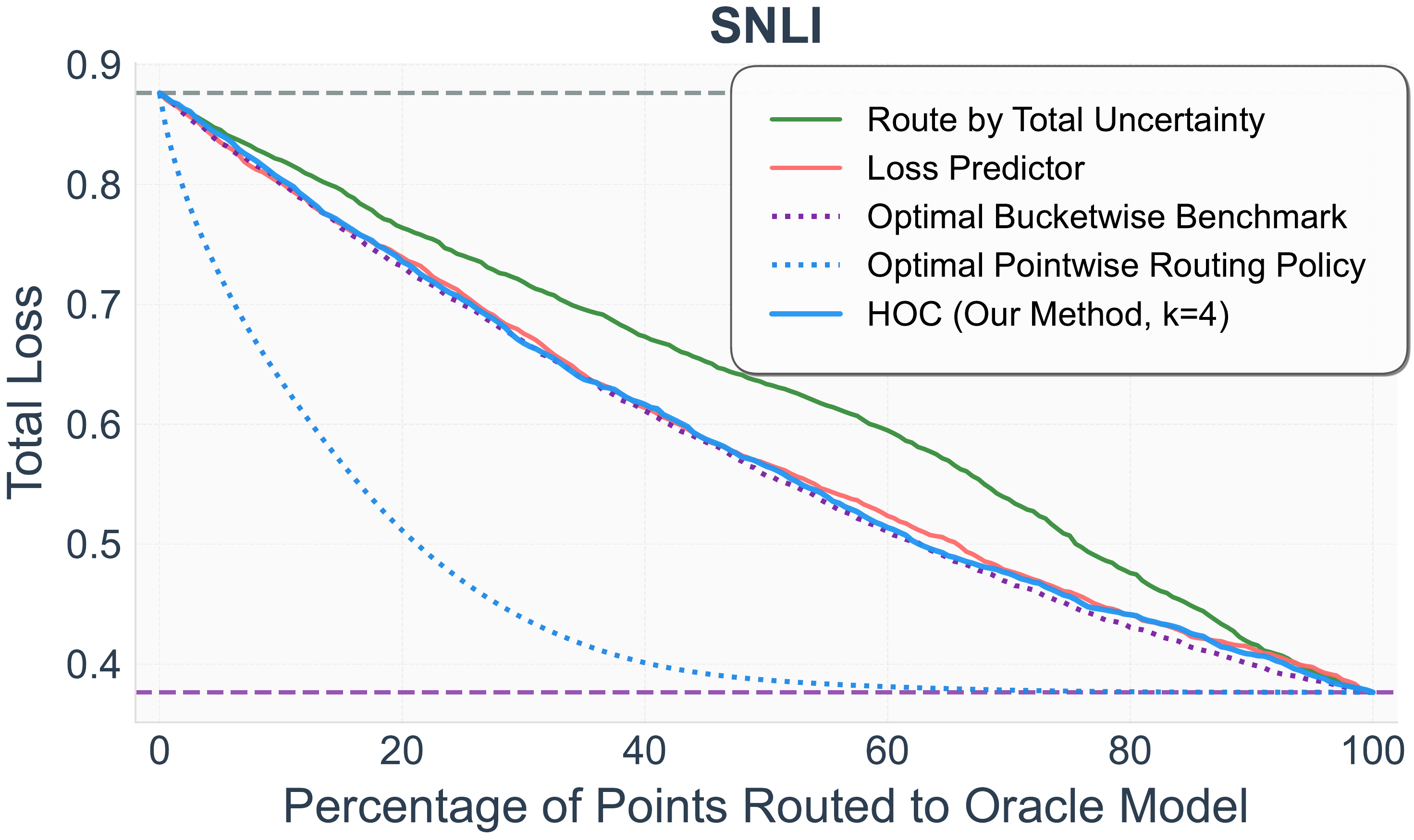} %
    {fig:routing_snli_recal} %
    {fig:routing_snli_no_pp} %
    {Routing curves for the SNLI dataset. The right (b) employs an out-of-the-box weak predictor for the dataset, whereas the left (a) uses the same predictor but with an additional post-hoc calibration step on the same buckets used by the higher-order-calibrated router.} %
    {fig:routing_snli_full}
    
\subsection{ChaosNLI}

\compareplots
    {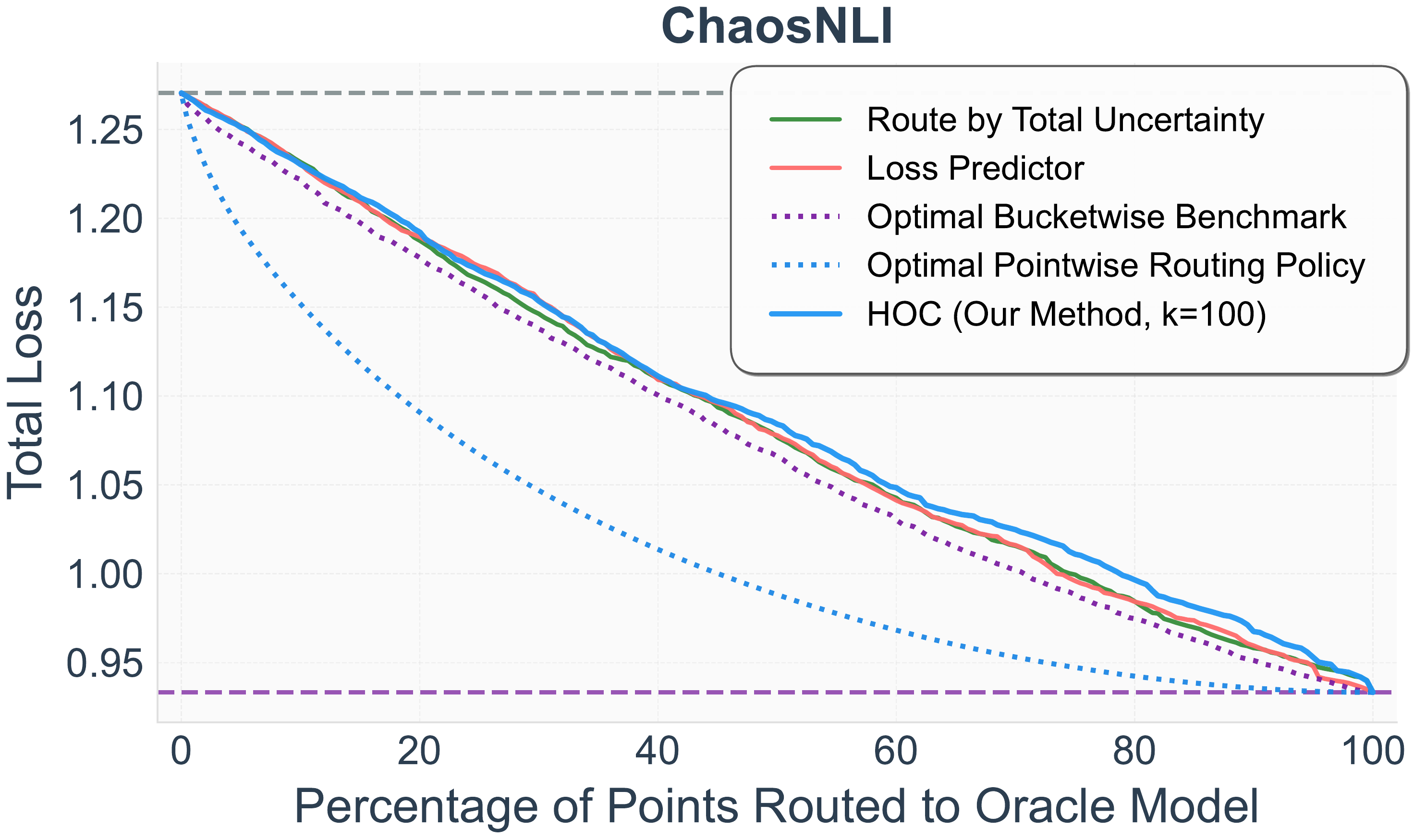} %
    {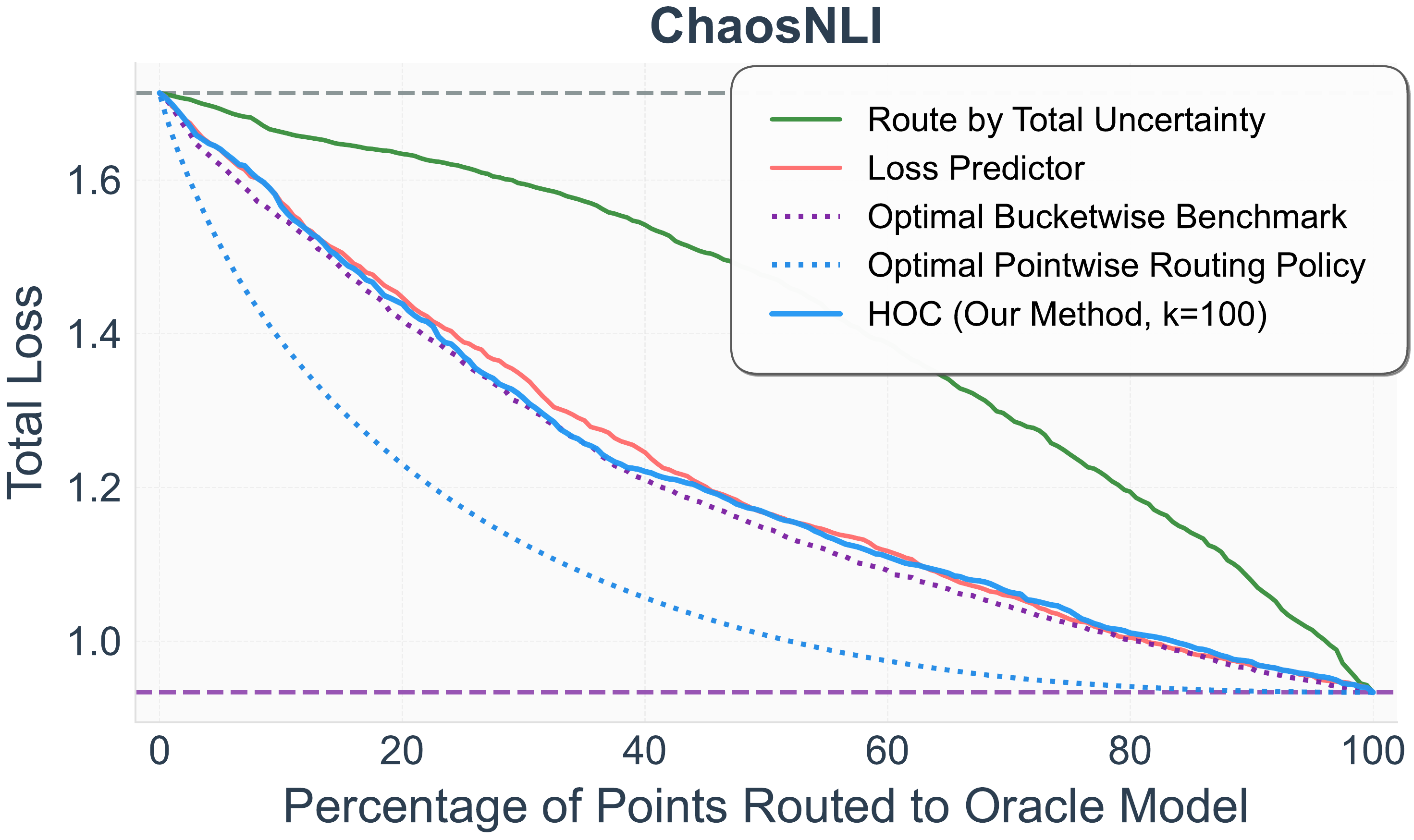} %
    {fig:routing_chaosnli_recal} %
    {fig:routing_chaosnli_no_pp} %
    {Routing curves for the ChaosNLI dataset. The right (b) employs an out-of-the-box weak predictor for the dataset, whereas the left (a) uses the same predictor but with an additional post-hoc calibration step on the same buckets used by the higher-order-calibrated router.} %
    {fig:routing_chaosnli_full}

\clearpage

\section{Loss Flexibility Plots}\label{sec:loss-comp-plots}
\subsection{Explanation of Loss Functions}\label{sec:loss-explanation}
We overview the different losses appearing in the comparison plots in this section.

\textbf{Remark on Proper Losses:} Our routing framework can flexibly adapt to various new losses, with the restriction that these losses must be proper (see \cref{sec:optimal-routing} for a definition). While standard functions like Brier or Cross-Entropy are (strictly) proper out-of-the-box, other losses such as the classification loss are not, and should not be evaluated directly on raw probability predictions. To handle this, some of the losses below couple the underlying decision cost with an optimal post-processing step (such as reweighting or thresholding the predicted probabilities $p$). This ensures that the expected loss minimized by the model continues to act as a proper loss with respect to the original predicted probabilities.

\begin{itemize}
    \item Square (Brier) Loss: This denotes the typical squared distance between the label distribution and prediction, $\ell(y, p) = \|y - p\|_2^2$.
    \item Cross Entropy Loss: This is the standard negative log-likelihood, defined as $\ell(y, p) = -\sum_{c} y_c \log(p_c)$.
    \item Classification Loss: Let $\hat{y} = \arg\max_c p_c$ be the predicted class with maximal probability. The expected 0-1 error given the true probabilities $y$ is $\ell(y, p) = 1 - y_{\hat{y}}$.
    \item Weighted False-Positive Loss (``Fn Fp Loss'', Binary only): This applies asymmetric penalties to false positives ($c_{FP}$) and false negatives ($c_{FN}$). The decision $\hat{y} \in \{0, 1\}$ is determined by thresholding the predicted odds: $\hat{y} = \mathbf{1}[\frac{p_1}{p_0} \ge \frac{c_{FP}}{c_{FN}}]$. The expected loss is evaluated against the true probabilities $y$ as $\ell(y, p) = c_{FP} \cdot \mathbf{1}[\hat{y}=1] y_0 + c_{FN} \cdot \mathbf{1}[\hat{y}=0] y_1$.
    \item Three Part Loss (binary only): This loss extends the weighted false positive loss by introducing an alternative third action with a fixed cost. Let $p_1$ be the predicted probability of the positive class. The loss is defined conditionally: $\ell(y, p) = y_1$ if $p_1 < 0.25$; a constant $0.25$ if $0.25 \le p_1 < \frac{15}{16}$; and $4 y_0$ if $p_1 \ge \frac{15}{16}$.
    \item Asymmetric Class Penalty (multiclass): This applies a penalty multiplier $\gamma$ strictly when predicting class 0. The decision scores $s$ are biased against class 0 such that $s_0 = \gamma p_0 + (1-\gamma)$, while $s_c = p_c$ for $c \neq 0$. For the optimal action $\hat{y} = \arg\max_c s_c$, the loss is $\gamma(1 - y_0)$ if $\hat{y} = 0$, and standard $1 - y_{\hat{y}}$ for all other classes.
\end{itemize}

\subsection{Synthetic Data}
\compareplotsvertsmall
    {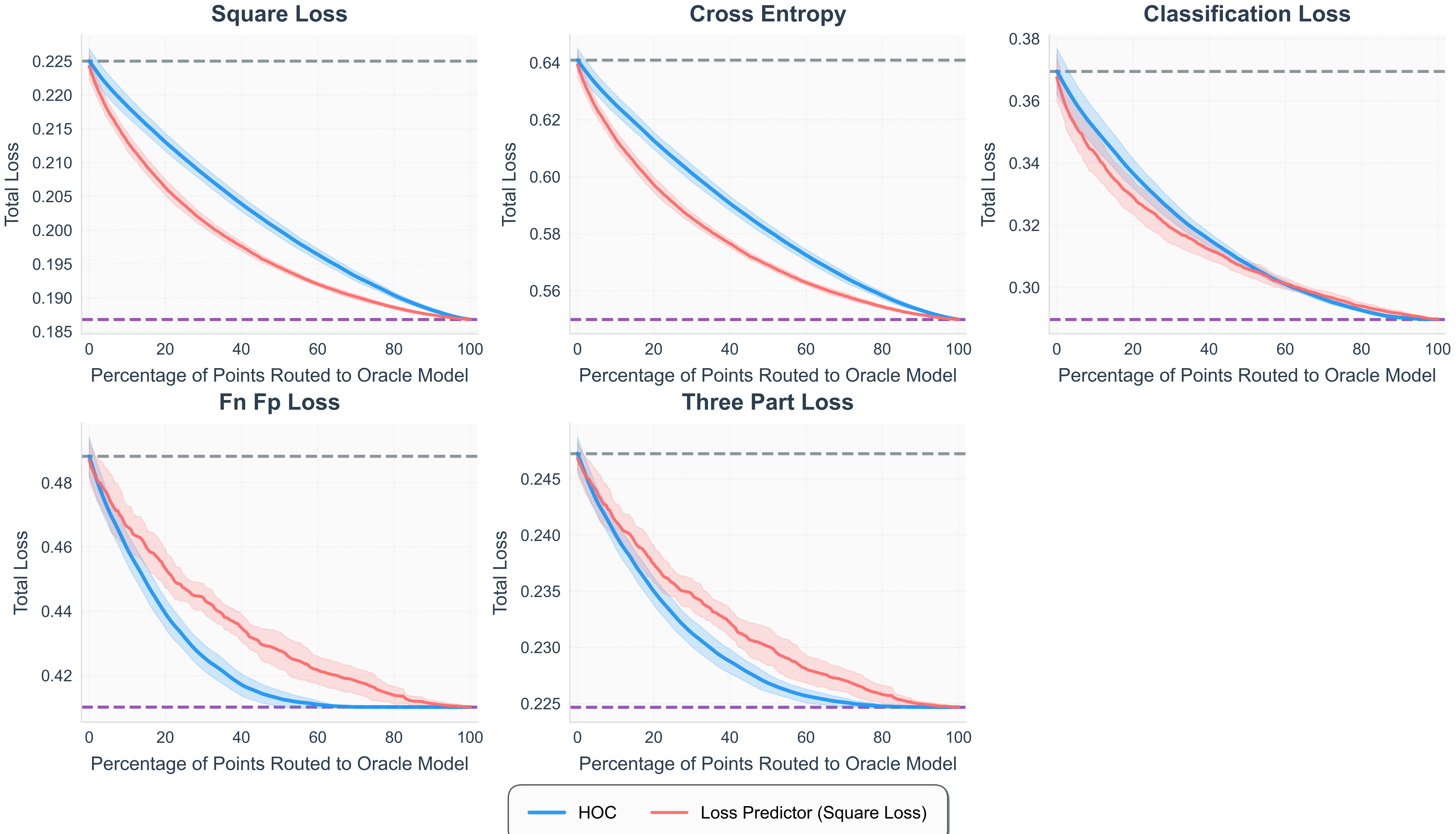} %
    {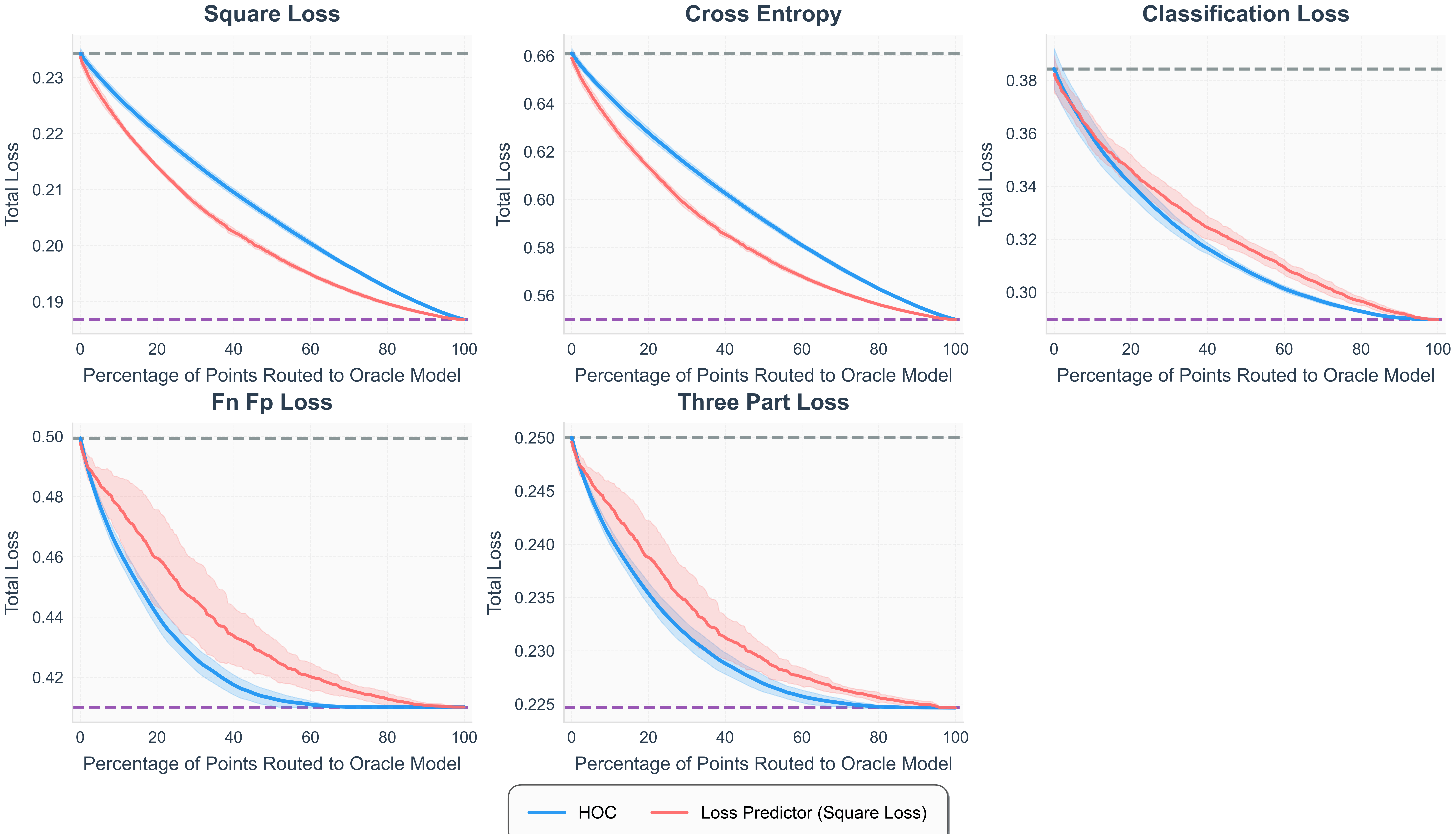} %
    {fig:loss_johnson_no_pp} %
    {fig:loss_johnson_recal} %
    {Routing performances for different target loss functions for synthetic data as described in \cref{sec:synthetic_data_gen} and target function as depicted in \cref{fig:johnson_fn}. The loss predictor baseline is a static loss predictor trained to predict the amount of reducible square loss. The right (b) employs an out-of-the-box weak predictor for the dataset, whereas the left (a) uses the same predictor but with an additional post-hoc calibration step on the same buckets used by the higher-order-calibrated router. Error bars show the standard deviation over 10 runs, re-randomizing both the training set used to train the weak model and the calibration set used to higher-order calibrate and train a loss predictor.} %
    {fig:loss_johnson_full}

\subsection{CIFAR-10H}
\compareplotsvert
    {loss_cifar10_recal.png} %
    {loss_cifar10_no_pp.png} %
    {fig:loss_cifar10_no_pp} %
    {fig:loss_cifar10_recal} %
    {Routing performances for different target loss functions for the CIFAR-10H dataset. The loss predictor baseline is a static loss predictor trained to predict the amount of reducible cross-entropy loss. The right (b) employs an out-of-the-box weak predictor for the dataset, whereas the left (a) uses the same predictor but with an additional post-hoc calibration step on the same buckets used by the higher-order-calibrated router.} %
    {fig:loss_cifar10_full}

\subsection{SNLI}
\compareplotsvert
    {loss_snli_recal.png} %
    {loss_snli_no_pp.png} %
    {fig:loss_snli_no_pp} %
    {fig:loss_snli_recal} %
    {Routing performances for different target loss functions for the SNLI dataset. The loss predictor baseline is a static loss predictor trained to predict the amount of reducible cross-entropy loss. The right (b) employs an out-of-the-box weak predictor for the dataset, whereas the left (a) uses the same predictor but with an additional post-hoc calibration step on the same buckets used by the higher-order-calibrated router.} %
    {fig:loss_snli_full}

\subsection{ChaosNLI}
\compareplotsvert
    {loss_chaosnli_recal.png} %
    {loss_chaosnli_no_pp.png} %
    {fig:loss_chaosnli_no_pp} %
    {fig:loss_chaosnli_recal} %
    {Routing performances for different target loss functions for the ChaosNLI dataset. The loss predictor baseline is a static loss predictor trained to predict the amount of reducible cross-entropy loss. The right (b) employs an out-of-the-box weak predictor for the dataset, whereas the left (a) uses the same predictor but with an additional post-hoc calibration step on the same buckets used by the higher-order-calibrated router.} %
    {fig:loss_chaosnli_full}
\section{Plots of Three-Way Decisions}\label{sec:three-way-plots}

\begin{figure}[H]
     \centering
     \begin{subfigure}[b]{0.48\textwidth}
         \centering
         \includegraphics[width=\textwidth]{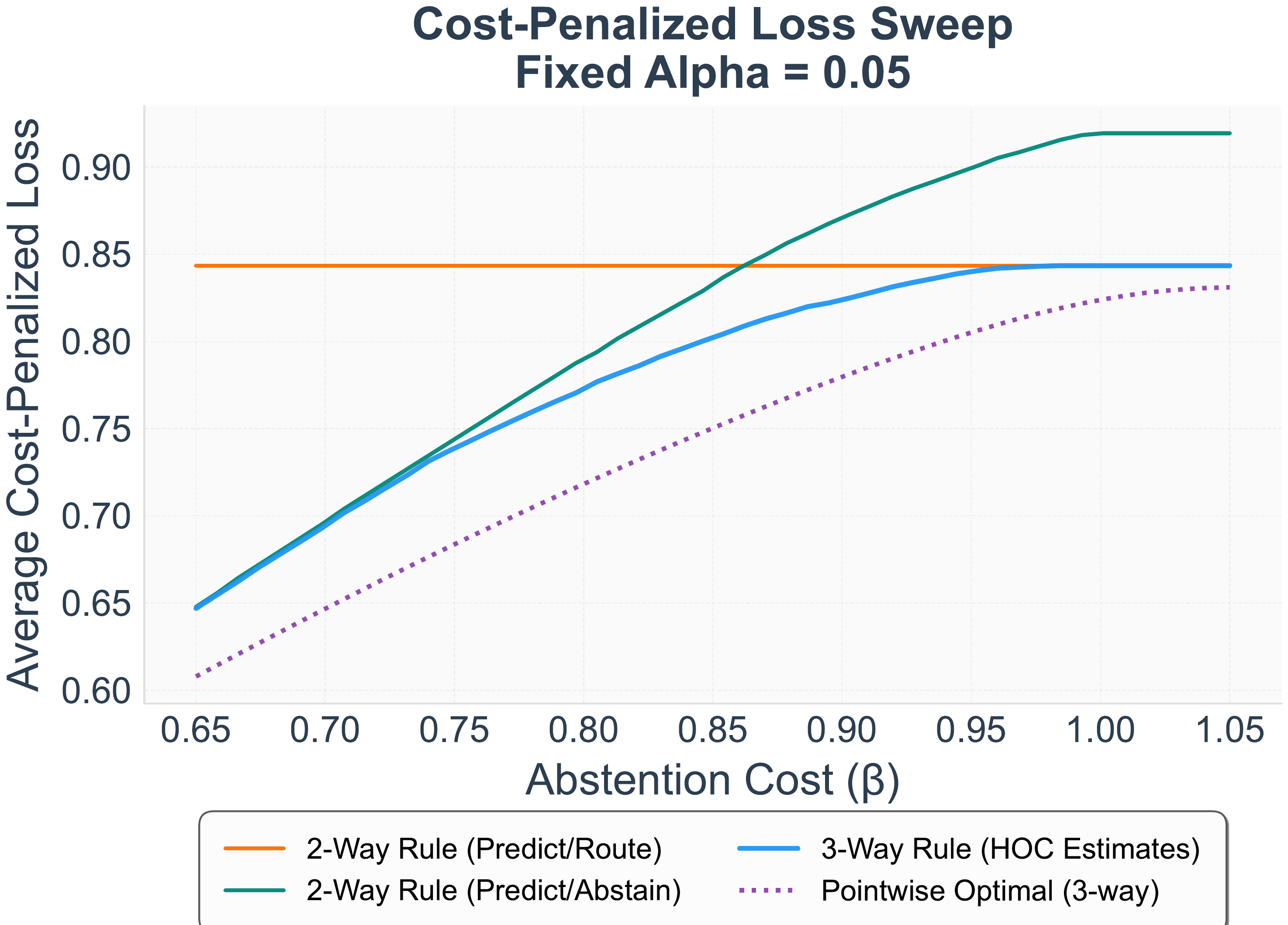}
         \caption{Synthetic Data (target function from \cref{fig:johnson_fn})}
         \label{fig:three-way-synthetic}
     \end{subfigure}
     \hfill
     \begin{subfigure}[b]{0.48\textwidth}
         \centering
         \includegraphics[width=\textwidth]{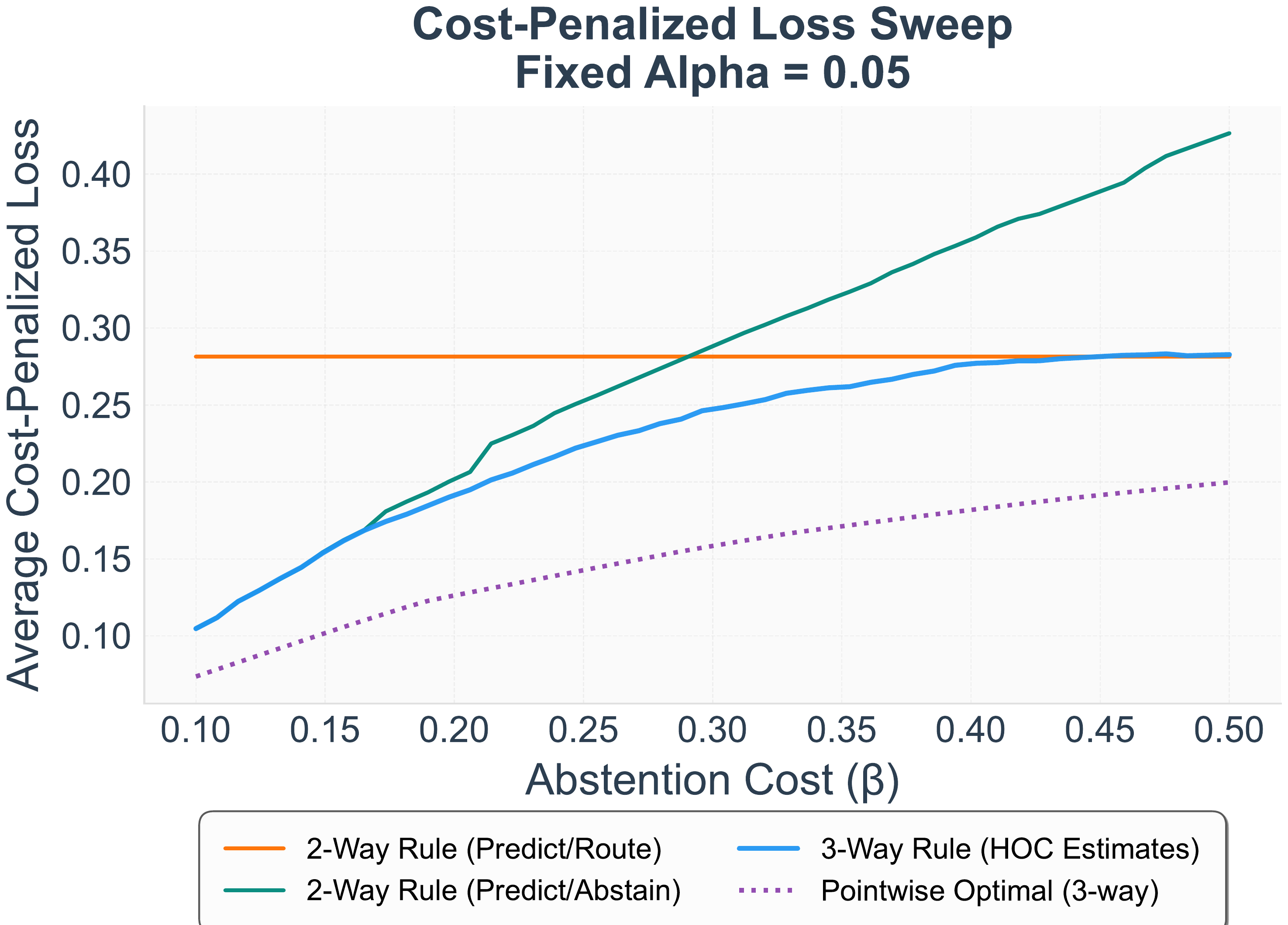}
         \caption{CIFAR-10H}
         \label{fig:three-way-cifar}
     \end{subfigure}

     \vspace{10pt} %

     \begin{subfigure}[b]{0.48\textwidth}
         \centering
         \includegraphics[width=\textwidth]{snli_cost_sweep.pdf}
         \caption{SNLI}
         \label{fig:three-way-snli}
     \end{subfigure}
     \hfill
     \begin{subfigure}[b]{0.48\textwidth}
         \centering
         \includegraphics[width=\textwidth]{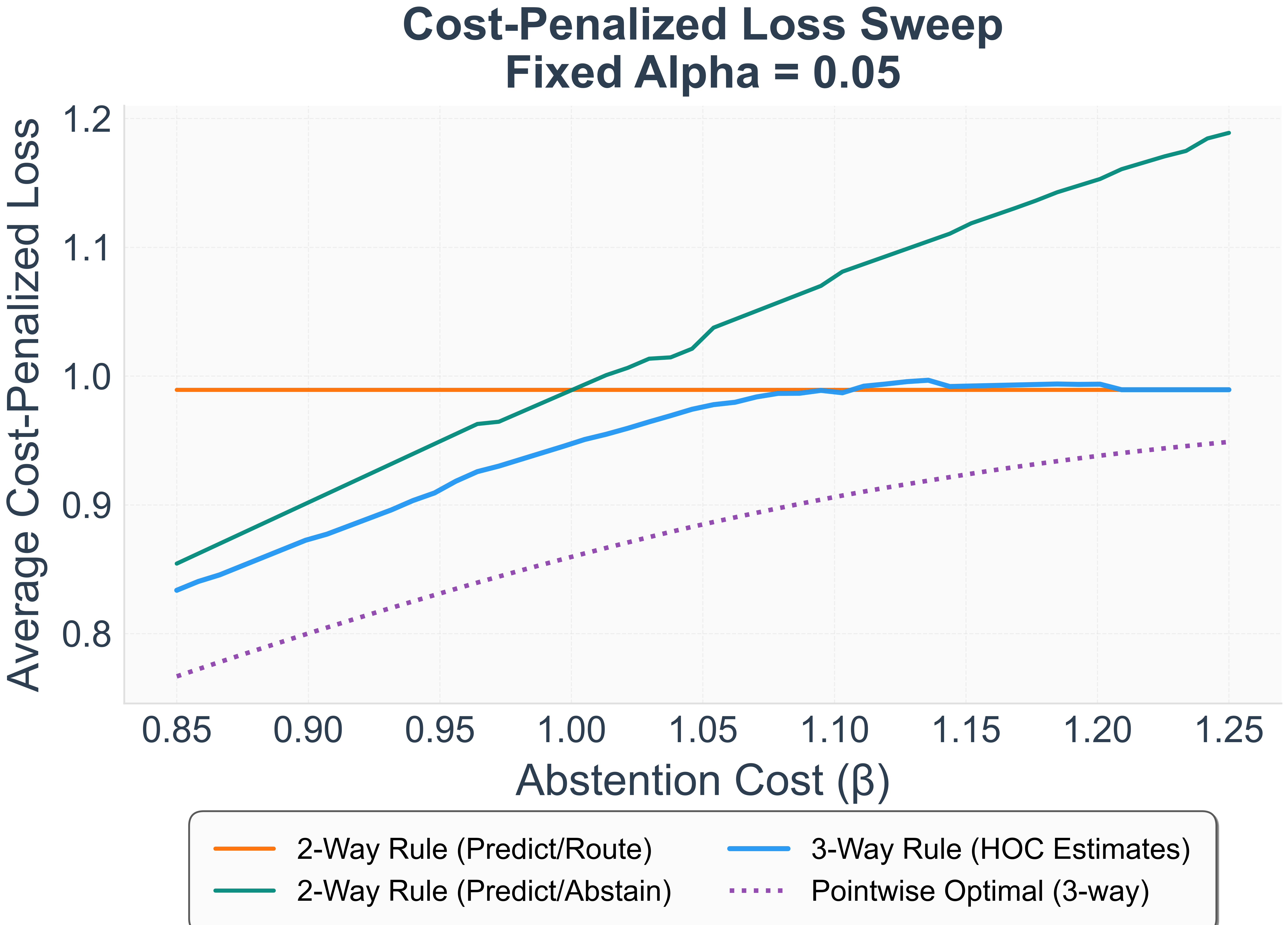}
         \caption{ChaosNLI}
         \label{fig:three-way-chaos}
     \end{subfigure}
     
     \caption{Depiction of our method's cost-penalized performance on the three-way decision task of Predict vs Route vs Abstain across four datasets, for a fixed routing penalty of $\alpha = 0.05$ while sweeping the abstention penalty $\beta$. We see that our method gracefully adapts to the three-way decision setting, out-performing or closely aligning with the two-way decision baselines of predict/route (orange) and predict/abstain (green).}
     \label{fig:combined-three-way-routing}
\end{figure}

\end{document}